\numberwithin{equation}{section}
\newcommand\OR{\,\vee\,}
\newcommand\AND{\,\wedge\,}
\newcommand\NOT{\,\neg\,}
\newcommand\intersect{\, \cap\, }
\newcommand{\mc}[1]{{\cal #1}}
\newcommand{\de}{\delta}
\theoremstyle{plain}
\newtheorem{theorem}{Theorem}[section]
\newtheorem{lemma}[theorem]{Lemma}
\newtheorem{corollary}[theorem]{Corollary}
\theoremstyle{definition}
\newtheorem{definition}{Definition}[section]
\newtheorem{example}{Example}[section]
\theoremstyle{remark}
\newtheorem*{remark}{\textbf{Remark}}
\newtheoremstyle{break}
  {9pt}
  {9pt}
  {\itshape}
  {}
  {\bfseries}
  {.}
  {\newline}
  {}
\theoremstyle{break}
\newtheorem*{problem}{Problem}
\title{Fault Tolerant Boolean Satisfiability}
\author{\name Amitabha Roy \email aroy@cs.bc.edu\\
\addr Computer Science Department, \\
Boston College,
Chestnut Hill, 
MA 02467.}
\begin{document}
\maketitle
\begin{abstract} 
  A $\delta$-model is a satisfying assignment of a Boolean formula for
  which any small alteration, such as a single bit flip, can be
  repaired by flips to some small number of other bits, yielding a new
  satisfying assignment.  These satisfying assignments represent
  \emph{robust} solutions to optimization problems (e.g., scheduling)
  where it is possible to recover from unforeseen events (e.g., a
  resource becoming unavailable).  The concept of $\delta$-models was
  introduced by \citeA{gins}, where it was
  proved that finding $\delta$-models for general Boolean formulas is
  NP-complete.  In this paper, we extend that result by studying the
  complexity of finding $\delta$-models for classes of Boolean
  formulas which are known to have polynomial time satisfiability
  solvers.  In particular, we examine 2-SAT, Horn-SAT, Affine-SAT,
  dual-Horn-SAT, $0$-valid and $1$-valid SAT.  We see a wide variation in
  the complexity of finding $\delta$-models, e.g., while 2-SAT and
  Affine-SAT have polynomial time tests for $\delta$-models, testing
  whether a Horn-SAT formula has one is NP-complete.
 \end{abstract}

 \section{Introduction}\label{sec:introduction}

 An important problem in the artificial intelligence community
 concerns the allocation of resources at or near the minimal cost. An
 optimal solution to such a problem might be rendered infeasible due
 to some unforeseen event (for example, a resource becoming
 unavailable or a task exceeding its allocated deadline). Hence, the motivation
 is to search for optimal solutions which are immune from such events.
 In this paper, we consider the complexity of finding such ``robust''
 solutions, where we only allow for a fixed small number of bad
 events, with the added condition that such bad events can be
 rectified by making a small change to the solution.  These solutions,
 which we call $\delta$-models, were introduced by  \citeA{gins},
 and further explored in \citeA{BM99}.  This approach to fault
 tolerance has been extended to constraint-satisfaction problems
 (CSPs) \cite{hebrard04,hebrardec04} and to  applications in
 combinatorial auctions~\cite{holland04}. \citeA{hoos}
 consider this approach to robustness in the framework of dynamic satisfiability (which
 they call DynSAT) where the goal is to be able to revise optimal
 solutions under a constantly changing input problem.

 We extend the initial complexity results in \citeA{gins} by looking at
 the theoretical complexity of tractable instances of satisfiability
 (SAT) identified by Schaefer's \emph{dichotomy} theorem \cite{Sch78}.
 The dichotomy theorem proves that the polynomial time solvable
 instances of SAT are $2$-SAT, Horn-SAT, dual-Horn-SAT, Affine-SAT,
 $0$-valid SAT and $1$-valid SAT and any other form is NP-complete.
 Our goal is to study the complexity of finding $\de$-models for the tractable
 problems identified by the dichotomy theorem.  We show a wide variation in complexity by type ($2$-SAT vs
 Horn-SAT) and by parameter (the number of repairs allowed for each
 break).

 Formally, a $\delta$-model of a Boolean formula, called supermodels
 by \citeA{gins}, is a satisfying assignment (satisfying assignments are usually called \emph{models}) such that if any bit of
 the assignment is flipped (from $0$ to $1$ or vice versa), one of the
 following conditions hold:
 \begin{itemize}
 \item[(i)] either the new assignment is a model or
 \item[(ii)] there is at least one other bit that can be flipped to
   obtain another model.
 \end{itemize}
 Flipping a bit of a $\delta$-model is called a  \emph{break}, corresponding
 to a ``bad'' event. The bit that is flipped to get another satisfying assignment is a \emph{repair}
 (we allow that some breaks may not need a repair). We also study a generalization of the concept:
 $\delta(r,s)$-models are satisfying assignments for which breaks to every set of up to
 $r$ bits need up to $s$ repairs (to avoid trivialities, we require that the repair bits are different from 
 the break bits).

 We let $\delta$-SAT refer to the decision question as to whether an
 input Boolean formula has a $\delta$-model.  When we restrict the
 form of the input Boolean formula, we refer to the corresponding
 decision questions as $\delta$-2-SAT, $\delta$-Horn-SAT etc. The
 higher degree variants of these problems are $\delta(r,s)$-SAT
 etc. where we consider $r$ and $s$ to be fixed integers.  The following problems
 are proved to be  NP-complete:
 \begin{itemize}
 \item[-] $\delta(r,s)$-SAT  \cite{gins},  $\delta(1,s)$-2-SAT for $s>1$,  
 \item[-] $\delta(1,s)$-Horn-SAT,  $\delta(1,s)$-dual-Horn-SAT, 
 \item[-] $\delta(r,s)$-$0$-valid-SAT and $\delta(r,s)$-$1$-valid-SAT.
 \end{itemize}
 In contrast, we prove that the  following problems are in P:  
 \begin{itemize}
 \item[-] $\delta(1,1)$-2-SAT, $\delta(r,s)$-Affine-SAT.
\end{itemize}

 The definition of $\delta$-models does not require that the new model
 obtained by repairing a break to a $\delta$-model is itself a
 $\delta$-model.   We define $\delta^*$-models to be $\delta$-models
 such that every break needs at most one repair to obtain another
 $\delta$-model. Such models represent the greatest degree of fault tolerance
 that can be achieved for the problem. We refer to the corresponding
 decision problems as $\delta^*$-SAT, $\delta^*$-2-SAT etc.
 We prove that  $\delta^*$-SAT is in NEXP (non-deterministic exponential
     time) and is NP-hard,  $\delta^*$-2-SAT is in P and that  $\delta^*$-Affine-SAT is in P.
\medskip

\begin{remark} Since our goal in this paper is to study the problems
  in Schaefer's tractable class with respect to fault tolerance, our
  yardstick to measure complexity is membership in P.  Hence, we do
  not concern ourselves with finding the exact running times within P.
  Optimizing runtimes may well prove important for practical applications (at least in
  the rare instances when we find polynomial time algorithms).
\end{remark}

\noindent\emph{Organization of the paper:} In Section~\ref{sec:defin-notat}, we introduce
and define the problem and establish notation. In Section~\ref{sec:compl-find-delta},
we study the complexity of finding $\delta$-models of general Boolean formulas.
In Section~\ref{section-restrict}, we consider the complexity of finding $\delta$-models
for restricted classes of formulas: we consider $2$-SAT (Section~\ref{subsection-2sat}),
Horn-SAT (Section~\ref{subsection-horn}), $0$-valid-SAT, $1$-valid-SAT (Section~\ref{subsection-valid})
and Affine-SAT (Section~\ref{sec:finding-de-models-affine}). We conclude with a section on future work (Section~\ref{section-future}).

\section{Definitions and Notations}  \label{sec:defin-notat}

   In this section, we establish some of the notation used in
 the rest of the paper and formally define the problems
 we wish to study.  
 
\bigskip

A Boolean variable can take on two values -- true or false which  we  write as $1$ and $0$ respectively.
A literal is either a variable $v$ or its negation, denoted by $\neg v$
(a variable is often called a \emph{pure literal}).  A clause is a
disjunction ($\OR$) of literals (for example, $v_1 \OR \neg v_2 \OR
v_3$ is a clause).  A Boolean formula  is a function from some
set  of Boolean variables $V=\{v_1, v_2, \ldots, v_n\}$ to $\{0,1\}$. In
computational problems, we assume that Boolean formulas are input in
a  canonical fashion: usually as a conjunction ($\AND$)
of clauses (in which case, we say that they are in conjunctive normal
form (CNF)).

We consider various forms of CNF formulas. A $2$-SAT formula is a
Boolean formula in CNF with at most $2$ literals per clause (more generally,
a $k$-CNF formula or $k$-SAT formula is a CNF formula with $k$ literals per clause).
A Horn-SAT formula is a Boolean formula  in CNF where each clause has at most
one positive literal (each such clause is called a Horn clause).
Equivalently, a Horn clause can be written as an implication $ ((v_1
\AND v_2 \ldots \AND v_r) \rightarrow u)$ where $u, v_1, v_2, \ldots, v_r$
are pure literals and $r \geq 0$.  A dual-Horn-SAT formula is a
CNF formula where each clause has at most one negative literal.
An Affine-SAT formula is a CNF formula in which each clause is an exclusive-or ($\oplus$)
of its literals or a negation of the exclusive-or of its literals (such a clause is satisfied
exactly when an odd number of  the literals are set to $1$). Equivalently, each clause 
of an Affine-SAT formula can be written as a linear equation over 
the finite field $\{0,1\}$ of $2$ elements.

An \emph{assignment} is a function $X: V \rightarrow \{0,1\}$ that
assigns a truth value (true or false) to each variable in $V$.  Given
such an assignment of truth values to $V$, any Boolean formula $\phi$
defined over $V$ also inherits a truth value (we denote this by
$\phi(X)$), by applying the rules of  Boolean logic. A
\emph{model} is an assignment $X$ such that $\phi(X)$ is true.
 We will often treat an assignment $X$ as an $n$-bit vector where the
$i$-th bit, denoted by $X(i)$, $1 \leq i \leq n$,  is the truth value of the variable $v_i$. With a
slight abuse of notation, we let $X(l)$ denote the value of the
literal $l$ under the assignment $X$.

A $0$-valid-SAT (resp. $1$-valid-SAT) formula is one which is satisfied by an assignment with every variable
set to $0$ (resp. $1$).

The propositional satisfiability problem 
is defined as follows:

\begin{problem}[SAT]
\textbf{Instance:}  A Boolean formula $\phi$. \\
\noindent \textbf{Question:} Does $\phi$ have a model~?
\end{problem}

SAT is the canonical example of an \emph{NP}-complete decision problem
(for definitions of the complexity class \emph{NP} and completeness,
see \citeR<e.g.,>{GJ79,papa}).  Many computational difficult
problems in artificial intelligence have \emph{SAT} encodings (for
example, in planning \cite{kautz92}) and so finding 
heuristic algorithms for solving \emph{SAT} is an important research
area in artificial intelligence.  Polynomial time algorithms are known
for SAT when the input instance is either Horn-SAT, dual-Horn-SAT,
$2$-SAT, Affine-SAT, $0$-valid-SAT or $1$-valid-SAT.
\citeA{Sch78} proved that these are the only cases when SAT is
solvable in polynomial time, every other case being NP-complete
(Schaefer's theorem applies to a more general situation called
``generalized satisfiability'' where the truth value of each clause is
determined by a set of constraints specified as a relation).

\medskip

We now introduce the concept of fault-tolerant models. Given an
$n$-bit assignment $X$, the operation $\delta_i$ flips the $i$-th bit
of $X$ (from a $0$ to a $1$, or vice versa). The operation produces a
new assignment which we denote by $\delta_i(X)$. Similarly, if we flip
two distinct bits (say bits $i$ and $j$), we write the new assignment
as $\delta_{ij}(X)$ and more generally, $\delta_S(X)$ represents $X$
with the bits in $S$ flipped (where $S$ is some subset of the coordinates $\{1, 2,
\ldots, n\}$).

\begin{definition} \label{def2} \label{def:1} A $\delta$-model of a
  Boolean formula $\phi$ is a model $X$ of $\phi$ such that for all
  $i$, $1 \leq i \leq n$, either
  \begin{itemize}
  \item[(i)] the assignment $\delta_i(X)$ is a model or
  \item[(ii)] there is some other bit $j$, where $1 \leq j \leq n$ and
    $i \not= j$, such that $\delta_{ij}(X)$ is a model.
  \end{itemize}
\end{definition}

In other words, a $\delta$-model is a model such that if any bit is
flipped (we call this a \emph{break}), at most one other bit flip is
required to produce a new model.  The second bit flip is called a
\emph{repair}.

\begin{example}  \label{ex:1}
Let $H(n,k)$ be a Boolean formula defined over $n$ variables $v_1, v_2, \ldots, v_n$,  
  whose models are $n$-bit assignments with exactly $k$ bits set to $1$.
  For example:

  \[    H(n,1)  = \left(\bigvee_{i=1}^n v_i\right)  \AND \bigwedge_{i=1}^n  \left\{ v_i \rightarrow \left(\bigwedge_{\substack{ j=1 \\ j \not= i}}^{n} \neg v_j \right) \right\} \]

  The first clause specifies that at least one bit of a model is $1$
  and each successive clause specifies that if the $i$-th bit is $1$,
  then every other bit is set to $0$ where $1 \leq i \leq n$. Each
  model of $H(n,1)$ is a $\delta$-model: any break to a $0$-bit  has a
  unique repair (the bit set to $1$) and a break to the $1$-bit 
  has $(n-1)$ possible repairs (any one of the $0$-bits).
\end{example}

The following decision problem can be interpreted as the fault-tolerant analogue of SAT:

\begin{problem}[$\delta$-SAT]
\textbf{Instance:}  A Boolean formula $\phi$. \\
\noindent \textbf{Question:} Does $\phi$ have a $\delta$-model~?
\end{problem}

The problem $\delta$-SAT and its variants (when we restrict the form of the input Boolean formula) is the focus of this paper.

\medskip

We now extend our notion of single repairability to repairability of a sequence of breaks to a model.

\begin{definition}\label{genmodels}
  A $\de(r,s)$-model of a Boolean formula $\phi$ 
  is  a model of $\phi$ such that  for every choice  of at most
  $r$ bit flips (the  ``break'' set) of the model, there is a disjoint set  of at most $s$ bits (the ``repair'' set) that
  can be flipped to obtain another model  of $\phi$.
\end{definition}

 \begin{remark}

\begin{itemize}
\item[(i)] We view $r$ and $s$ as fixed constants unless otherwise
  mentioned. To avoid redundancies, we have required that the repair
  set is disjoint from the break set.  Since we require ``at most $s$
  bits'' for repair, we also allow for the case when no repair or
  fewer than $s$ repairs are needed.
\item[(ii)] Under this definition, $\delta(1,1)$-models are $\delta$-models and we continue to refer to them as $\delta$-models  for notational simplicity.
\item[(iii)] Similar to the definition of $\delta$-SAT, we can define a decision problem $\delta(r,s)$-SAT which asks whether an input Boolean 
formula has a $\delta(r,s)$-model.
\end{itemize}

\end{remark}

\begin{example}  \label{ex:2}
Each model of $H(n,k)$ (see, e.g., Example~\ref{ex:1})
  is also a $\delta(k,k)$ model when $k \leq n/2$. 
\end{example} 

\medskip

\noindent\textbf{Assumptions:} 
In all our discussions, we will assume that
every variable of an input  Boolean formula appears in both positive and negative literals
and that an input Boolean formula is in clausal form with no variable appearing more than once in a clause
(i.e., there is no clause of the form $v_1 \OR \neg v_1 \OR v_2$). We also assume that
in any instance of $\delta$-SAT (or its variants), 
there is no clause which consists of a single literal, since in that case the input formula cannot have
a $\delta$-model.

\medskip

Consider a $\delta$-model $X$  of a Boolean formula and suppose that
$Y$ is a model which repairs some break to $X$. Our definition (Definition~\ref{def:1}) of
$\delta$-models does not require that $Y$ itself is a $\delta$-model. If
we enforce that every break to $X$ is repaired by some $\delta$-model,
then not only is $X$ tolerant to a single break, but so is the repair. We thus can
define a \emph{degree} of fault tolerance. In this setting, models
will be fault tolerant of degree $0$. Then, $\delta$-models will be
fault-tolerant of degree $1$. More generally, degree-$k$
fault-tolerant models (which we call $\delta^k$-models) consist of
$\delta^{k-1}$ fault-tolerant models such that every break is repaired
by a $\delta^{k-1}$ model.  We give the formal definition below.

\begin{definition}
 Let $\phi$ be a Boolean formula. We define $\delta^k(r,s)$-models inductively: 
$\delta^0(r,s)$-models are models of $\phi$. Then for $k \geq 1$, $\delta^{k}(r,s)$-models of $\phi$ 
 are $\delta^{k-1}(r,s)$-models $X$ of $\phi$ such that for every break of at most $r$ coordinates of $X$,
there is a disjoint set of at most $s$ coordinates of $X$ that can be flipped to 
get a $\delta^{k-1}(r,s)$-model of $\phi$.
\end{definition}

We  define the corresponding decision problem $\delta^k(r,s)$-SAT, which
asks whether an input Boolean formula has a $\delta^{k}(r,s)$-model.
Observe also that by definition a $\delta^k(r,s)$-model is a
$\delta^{i}(r,s)$-model for all $i, 0 \leq i \leq k-1$.

\medskip

\begin{example} \label{ex:3}
Let $n \geq 6$ be even and let $\phi$ be the Boolean formula:
\[ (v_1=v_2) \AND (v_3 = v_4) \AND \cdots (v_{n-1}=v_n) \AND (\bigvee_{k=0}^4 H(n,k)) \]
Then the models of $\phi$ are vectors with either $0$, $2$ or $4$ variables set to $1$. The variables in $\{v_{2i-1}, v_{2i}\}$
have to have the same truth value (and this forces breaks to have unique repairs).

We claim that $X=(0,0, \ldots, 0)$ is a $\delta^2(1,1)$-model of
$\phi$.  Any break (without loss of generality, assume it is to
coordinate $1$) is repaired by a flip to coordinate $2$ (and vice
versa).  The new vector $(1,1,0,0, \ldots, 0)$ is itself a
$\delta$-model.  A break to some other coordinate (say, bit $3$) has a
unique repair (bit $4$) to give a model $(1,1,1,1,0, \ldots, 0)$ with
$4$ $1$'s. This model is no longer repairable, since any model has to
have at most $4$ $1$'s, so a break to any coordinate with a $0$ (e.g., to bit $5$) has no repair.
\end{example}

\bigskip

Let $n \geq 2$ be even. Consider the formula
\begin{eqnarray*}  
(v_1 = v_2) \AND (v_3 = v_4) \cdots \AND (v_{n-1}=v_n) 
\end{eqnarray*}
which has $2^{n/2}$ models. Observe that each model is a $\delta$-model.  So these models are $\delta^k(1,1)$-models for every integer $k \geq 0$. We call these models $\delta^*(1,1)$-models
(as usual, when  $r=1$ and $s=1$, we denote $\delta^*(r,s)$-models as $\delta^*$-models for simplicity).

\begin{definition}\label{def3}
Let $\phi$ be a Boolean formula defined over $n$ Boolean variables. Then a model of $\phi$ which is
  a $\delta^k(r,s)$ model for each $k \geq 0$ is called a $\delta^*(r,s)$-model.
\end{definition}

Observe that the set of all $\delta^*$-models of $\phi$ form a set $M$ of models which satisfies the following properties:
\begin{itemize}
\item[(i)] Each vector in $M$ is a $\delta$-model, i.e., a break to a bit needs at most $1$ repair.
\item[(ii)] When any bit of a vector in $M$ is broken, there is some  repair (if such a repair is needed) such that the new vector is also a member of $M$.
\end{itemize}

We call such sets of $\delta^*$-models \emph{stable sets} of $\phi$.
These stable sets have been studied in a combinatorial setting by \citeA{lr05}.

\begin{remark}
  The existence of families of models which satisfy conditions (i) and
  (ii) above may be used to give an alternate definition of
  $\delta^*$-models which is perhaps more natural.  However, the
  notion of degrees of repairability and that $\delta^*$-models appear
  as the limit of these degrees, is not apparent from this definition,
  hence we use the formulation leading to Definition~\ref{def3}.
\end{remark}

The corresponding decision problem, named $\delta^*$-SAT,
asks whether an input Boolean formula has a $\delta^*$-model. Note
that a ``yes'' answer to this question implies the existence of not one but a 
\emph{family} of such models, in particular, a set $M$ as above.

\medskip

\noindent\emph{Complexity Classes}: We refer to \citeA{papa}
for definitions of basic complexity classes like P and NP.
A language  $L$  is said to be in NEXP if there is a non-deterministic Turing machine (NDTM)
that decides $L$  in exponential time (exponential in the length of the input).
A language $L$ is said to be NP-hard if there is a polynomial time reduction
from SAT to $L$. A language is NP-complete if it is in NP and is  NP-hard.
The complexity class NL (non-deterministic log space), which is contained in P,  consists of languages that are accepted
by non-deterministic Turing machines using space logarithmic in the size of its input. 
The complexity classes $\Sigma_k^P$ are defined as
follows: $\Sigma^P_1$ is NP, $\Sigma^P_k$ for $k \geq 2$ is the set of
languages accepted by a NDTM that has access to an oracle TM for
$\Sigma^P_{k-1}$.

\section{Complexity of Finding $\delta$-models} \label{sec:compl-find-delta}

In this section, we study the computational complexity of finding
$\delta$-models for general Boolean formulas.

\begin{theorem}\cite{gins} \label{NPC}
The decision problem $\de(r,s)$-SAT is NP-complete.
\end{theorem}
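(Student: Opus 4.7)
The plan is to prove two things: membership in NP and NP-hardness.

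For membership in NP, I would take the candidate $\delta(r,s)$-model $X$ itself as the polynomial-length certificate. To verify, enumerate all subsets $B \subseteq \{1,\ldots,n\}$ with $|B| \le r$; there are $O(n^r)$ such sets. For each $B$, check whether $\phi(\delta_B(X)) = 1$; if so, proceed. Otherwise enumerate the $O(n^s)$ disjoint subsets $R$ with $|R| \le s$ and check whether $\phi(\delta_{B \cup R}(X)) = 1$ for some such $R$. Each satisfaction test is linear in $|\phi|$, and since $r,s$ are fixed constants the total verification runs in polynomial time in $n$ and $|\phi|$.

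For NP-hardness, the plan is a polynomial-time reduction from SAT. Given $\phi(x_1,\ldots,x_n)$, set $m = 2r+1$ and introduce fresh copies $x_i^1,\ldots,x_i^m$ of each variable $x_i$. Build $\phi'$ from $\phi$ by replacing every occurrence of the literal $x_i$ by the CNF expansion of $\mathrm{maj}(x_i^1,\ldots,x_i^m)$ (and every occurrence of $\neg x_i$ by the negation of the same majority). For fixed $r$ the majority gadget expands into constantly many clauses per literal, so $\phi'$ has polynomial size. (Dummy clauses, if necessary, can be added to honor the paper's convention that every variable appears both positively and negatively.)

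Correctness goes both ways. In the easy direction, if $\sigma$ satisfies $\phi$, the lift $\sigma'(x_i^j) := \sigma(x_i)$ for all $j$ satisfies $\phi'$; moreover, any break $B$ with $|B| \le r$ flips at most $r$ of the $2r+1$ copies of any single variable, so a strict majority (at least $r+1$ copies) of each cluster remains equal to $\sigma(x_i)$. Every cluster's majority is unchanged and $\delta_B(\sigma')$ still models $\phi'$, so $\sigma'$ is in fact a $\delta(r,s)$-model for which the empty repair set suffices for every break. Conversely, any model $\tau$ of $\phi'$ yields a model of $\phi$ by taking $\sigma(x_i) := \mathrm{maj}(\tau(x_i^1),\ldots,\tau(x_i^m))$, so if $\phi$ is unsatisfiable then $\phi'$ has no model at all and in particular no $\delta(r,s)$-model.

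The main obstacle I anticipate is not the construction of the gadget but ensuring that the reduction truly engages the $\delta(r,s)$-property rather than merely the satisfiability of $\phi$. The majority encoding resolves this in the simplest possible way: the redundancy makes robustness a \emph{free} consequence of satisfiability, so the $\delta(r,s)$-SAT question collapses to the SAT question on $\phi'$. A more delicate reduction would be required to force repairs to play a nontrivial role---this is precisely what will be needed in the restricted-class hardness results that appear later in the paper, where the majority trick is not available because majority is not itself, e.g., a Horn formula.
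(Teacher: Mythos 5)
Your proof is correct in its essentials but takes a genuinely different route from the one the paper relies on. The paper defers to the reduction of \citeA{gins}, which it reproduces inside the proofs of Theorems~\ref{superstar} and~\ref{thr:1}: given a SAT instance $\phi$ on $v_1,\dots,v_n$, form $\phi'=\phi\vee v_{n+1}$ by adding a single fresh ``escape'' variable to every clause. A model of $\phi$ extended with $v_{n+1}=0$ is a $\delta(r,s)$-model because any break either touches $v_{n+1}$ (no repair needed) or is repaired by the single flip of $v_{n+1}$; conversely, a $\delta(r,s)$-model of $\phi'$ yields, after breaking $v_{n+1}$ if necessary, a model with $v_{n+1}=0$ and hence a model of $\phi$. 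Your majority gadget achieves the same collapse of $\delta(r,s)$-SAT to SAT by redundancy rather than by an escape hatch: no break of at most $r$ bits can move any cluster's majority, so the empty repair set suffices (which the definition explicitly permits). Both reductions make robustness a free by-product of satisfiability; the escape-variable reduction is lighter (one new variable, works verbatim for clauses of any width, and is reused in the paper for the $0$-valid, $1$-valid, $\delta^k$ and $\delta^*$ variants), while yours has the mild bonus that the witness needs no repairs at all and so is immediately a $\delta^*(r,s)$-model as well. Your NP-membership argument is the same as the paper's.

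One step needs repair. You claim $\phi'$ has polynomial size because the majority gadget contributes ``constantly many clauses per literal,'' but substituting a CNF for each literal of a clause yields a disjunction of CNFs, and restoring clausal form multiplies rather than adds: a clause of width $k$ becomes $\binom{2r+1}{r+1}^{k}$ clauses, which is exponential in $k$. Since the paper takes $\delta(r,s)$-SAT instances in CNF, you should start the reduction from $3$-SAT (or any fixed-width $k$-SAT) rather than from unrestricted SAT; with that change the blow-up per clause is a constant depending only on $r$ and the argument goes through.
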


\begin{remark} The proof technique used in \citeA{gins} to prove Theorem~\ref{NPC} is used
to prove other NP-hardness results in this paper, e.g., in Theorem~\ref{superstar} and Theorem~\ref{thr:1}.
\end{remark}

\begin{theorem} \label{superstar}
The decision problem $\delta^*$-SAT is in NEXP and is NP-hard.
\end{theorem}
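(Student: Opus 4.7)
The plan splits into an upper bound (NEXP) and a lower bound (NP-hard), and both parts exploit the stable-set characterization noted just after Definition~\ref{def3}: an assignment $X$ is a $\delta^*$-model of $\phi$ if and only if it belongs to some nonempty stable set $M$ of models of $\phi$.

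For NEXP membership, the nondeterministic algorithm guesses a stable set $M$ explicitly. Since $M \subseteq \{0,1\}^n$ we have $|M| \leq 2^n$, and the certificate has length $O(n \cdot 2^n)$. The verifier then checks, in time polynomial in the certificate size and $|\phi|$, that $M$ is nonempty, that every $X \in M$ satisfies $\phi$, and that for each $X \in M$ and each coordinate $i$ either $\delta_i(X)$ is itself a model of $\phi$ or some $\delta_{ij}(X)$ with $j \neq i$ lies in $M$. The total running time is exponential in $n$, so $\delta^*$-SAT is in NEXP.

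For NP-hardness I would reduce directly from SAT. Given $\phi(x_1,\ldots,x_n)$ in CNF with clauses $C_1,\ldots,C_m$, introduce fresh variables $x^*_1,\ldots,x^*_n$, let $C^*_j$ denote the clause obtained from $C_j$ by replacing each $x_i$ with $x^*_i$, and define
\[
\psi \;=\; \bigwedge_{i,j} \bigl(C_i \OR C^*_j\bigr),
\]
which is the CNF form of $\phi(x) \OR \phi(x^*)$ and has $m^2$ clauses of polynomial total size. The models of $\psi$ are exactly the pairs $(a,b)$ satisfying $\phi(a) \OR \phi(b)$. If $\phi$ has any model $c$, then
\[
M \;=\; \{\,(a,b) : \phi(a) \AND \phi(b)\,\}
\]
contains $(c,c)$ and is a stable set of $\psi$: flipping any single bit of $a$ leaves $\phi(b)$ intact, and symmetrically for flips of $b$, so every single-bit break of an element of $M$ is already a model of $\psi$. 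Hence no repair is ever required, condition (ii) of a stable set is vacuous, and every $(a,b) \in M$ is a $\delta^*$-model of $\psi$. Conversely, if $\phi$ is unsatisfiable then $\psi$ has no model at all, hence no $\delta^*$-model. This gives the polynomial reduction from SAT needed to establish NP-hardness.

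The only potential obstacle is compatibility with the paper's standing formula assumptions. Singleton clauses are ruled out automatically, since each clause $C_i \OR C^*_j$ of $\psi$ has at least two literals. The requirement that every variable appear in both polarities can be secured by a standard polynomial normalization of $\phi$ before the reduction (or by adding trivial padding clauses to $\psi$ that do not affect its models, and hence do not affect which sets are stable).
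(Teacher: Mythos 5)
Your NEXP argument is the same as the paper's (guess the stable family explicitly and verify it in exponential time) and is fine. The gap is in the NP-hardness direction, specifically in the claim that $M = \{(a,b) : \phi(a) \AND \phi(b)\}$ is a stable set because ``no repair is ever required, condition (ii) is vacuous.'' For $X$ to be a $\delta^*$-model it is not enough that every single-bit flip of $X$ is a model: by the inductive definition, the assignment reached after the break and its (possibly empty) repair must itself be a $\delta^{k-1}$-model for every $k$, i.e.\ must again lie in the stable set. When you break a bit of $a$ in $(a,b)\in M$, the resulting $(\delta_i(a),b)$ is a model of $\psi$ but in general satisfies neither $\phi$ on the first block nor membership in $M$, and no single repair disjoint from the break need return it to $M$ (take $\phi$ with a unique model $c$: from $(\delta_i(c),c)$ you cannot reach an element of $M$ without re-flipping bit $i$). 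So $M$ is not closed under the break--repair dynamics, and your argument only establishes that elements of $M$ are $\delta^1$-models, not $\delta^*$-models.

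The reduction itself can be salvaged: the set obtained by adjoining to $M$ all assignments in which one block is a model of $\phi$ and the other is within Hamming distance $1$ of a model of $\phi$ \emph{is} closed under break--repair (a break in the perturbed block is undone or absorbed, a break in the exact block is answered by restoring the perturbed block to a model), so every $(a,b)\in M$ is indeed a $\delta^*$-model; but this extra argument is exactly what is missing. The paper avoids the issue by using the reduction $\phi' = \phi \OR v_{n+1}$ and writing down the closed family explicitly: $Y$ (a model of $\phi$ extended by $v_{n+1}=0$) together with all $Y_i$ ($\delta_i$ of that model extended by $v_{n+1}=1$), verifying membership of the repaired vector in the family for every break. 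You should either switch to that construction or prove closure of the enlarged set described above; you should also verify the converse direction against the corrected family (your converse, ``$\phi$ unsatisfiable $\Rightarrow \psi$ unsatisfiable,'' is correct and simpler than the paper's).
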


\begin{proof} 

  Since an NDTM can guess a stable set of models (which could be of
  exponential size) and check that it  satisfies the required
  conditions for stability in exponential time, $\delta^*$-SAT is in
  NEXP.

  We reduce SAT to $\delta^*$-SAT using the same reduction used in the
  proof of Theorem~\ref{NPC} in \citeA{gins}: given an instance $\phi$
  of SAT, a Boolean formula $\phi$ over $n$ variables $v_1, v_2,
  \ldots, v_n$, we construct an instance of $\delta^*$-SAT: the
  formula $\phi'=\phi \OR v_{n+1}$ with $v_{n+1}$ being a new variable
  (to put $\phi'$ in CNF form, we add the variable $v_{n+1}$ to each
  clause in the CNF formula $\phi$).

  Suppose $\phi$ has a model $X$.  We show that $\phi'$ has a
  $\delta^*$-model by showing that it has a stable set of models $M$.
  Extend $X$ to a model $Y$ of $\phi'$ by setting $v_{n+1}=0$.  Let
  $X_i = \delta_i(X)$ for $1 \leq i \leq n$. Extend each assignment
  $X_i$ to a model $Y_i$ of $\phi'$ by setting $v_{n+1}=1$. Then let
  \[ M= \{Y, Y_1, Y_2, \ldots, Y_n\}. \] We now show that $M$ is a
  stable set. Suppose some bit $j \not= i$, where $1 \leq j \le n$ of $Y_i$ is
  broken, then repair by flipping the $i$-th bit (in which case, we
  get the repaired vector $Y_j \in M$). If the $i$-th bit of $Y_i$ is
  broken, the repair is the $(n+1)$-th bit (and vice versa), in which
  case the repaired vector is $Y$. If instead the $i$-th bit of $Y$ is broken,
  where $1 \leq i \leq n$, then the repair is the $(n+1)$-th bit (we obtain 
  $Y_i$ as the repaired vector in this case). If the $(n+1)$-th bit of $Y$ is broken,
  we can repair by flipping any of the first $n$ bits.  Hence $M$
  is a stable set of models and so $\phi'$ has a $\delta^*$-model (in
  fact, we have exhibited $n+1$ such models).

  Now we show that if $\phi'$ has a $\delta^*$-model, then $\phi$ has
  a model.  If $\phi'$ has a $\delta^*$-model, it must have a
  $\delta^*$-model with the $(n+1)$-th coordinate set to $0$. Then the
  restriction of this assignment to $v_1, v_2, \ldots, v_n$ has to be
  a model of $\phi$.  This completes the reduction from SAT.
\end{proof}

\begin{remark} Note that while every $\de^*$-model is a $\de^k$-model for each $k \geq 1$,
the NP-hardness of $\de^*$-SAT (Theorem~\ref{superstar}) does not imply the NP-hardness of $\de^k$-SAT (Theorem~\ref{thr:2} below).
The reduction used in Theorem~\ref{superstar}  can however be adapted to prove Theorem~\ref{thr:2}.
\end{remark}

\begin{theorem}\label{thr:2}
$\de^k$-SAT is NP-complete, where $k \geq 0$.
\end{theorem}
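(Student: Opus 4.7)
The plan is to separate the trivial case $k=0$ (where $\delta^0$-SAT coincides with SAT and is thus NP-complete) from $k \geq 1$, and for the latter to establish membership in NP and NP-hardness separately.

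For membership in NP, I would nondeterministically guess a candidate assignment $Y$ (of polynomial size) and verify it deterministically by unfolding the inductive definition: $Y$ is a $\delta^k$-model iff $Y$ satisfies $\phi$ and, for each of the $n$ possible single-bit breaks, either the break alone yields a $\delta^{k-1}$-model or some single-bit flip repairs it to one. At each level the recursion branches into at most $O(n^2)$ subproblems, and the depth is the fixed constant $k$, so the entire check runs in time $O(n^{2k}\cdot |\phi|)$, which is polynomial in the input.

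For NP-hardness, I would reduce from SAT using precisely the construction of Theorem~\ref{superstar}. Given $\phi$ over $v_1,\ldots,v_n$, set $\phi' = \phi \OR v_{n+1}$, put into CNF by appending $v_{n+1}$ to every clause of $\phi$. If $\phi$ has a model $X$, then the stable set $M = \{Y, Y_1, \ldots, Y_n\}$ exhibited in the proof of Theorem~\ref{superstar} already shows that $\phi'$ has a $\delta^*$-model, and since every $\delta^*$-model is a $\delta^k$-model by definition, $\phi'$ has a $\delta^k$-model. Conversely, suppose $\phi'$ has a $\delta^k$-model $Y$ with $k\geq 1$. If $Y(v_{n+1})=0$, then the restriction of $Y$ to $v_1,\ldots,v_n$ satisfies $\phi$ (because the appended disjunct $v_{n+1}$ is false in every clause). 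If $Y(v_{n+1})=1$, I would invoke the observation made right after the definition that a $\delta^k$-model is in particular a $\delta$-model: breaking bit $n{+}1$ then admits either no repair or a single-bit repair, producing in either case a model of $\phi'$ with $v_{n+1}=0$, from which a model of $\phi$ is extracted as before.

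The main obstacle, modest as it is, lies in the verification step: one must be careful that the recursive unwinding of the definition runs in polynomial time, which it does only because $k$ is treated as a fixed constant rather than part of the input. Aside from that, the argument is essentially a mechanical adaptation of the reduction in Theorem~\ref{superstar}, leaning on the two inclusions \emph{$\delta^*$-model $\Rightarrow$ $\delta^k$-model} (for the forward direction) and \emph{$\delta^k$-model $\Rightarrow$ $\delta$-model} for $k\geq 1$ (for the reverse direction).
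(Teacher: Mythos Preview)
Your proposal is correct and follows essentially the same approach as the paper: the base case $k=0$ is Cook's theorem, NP membership is established by nondeterministically guessing an assignment and deterministically verifying the $\delta^k$ property in $n^{O(k)}$ time via the recursive definition, and NP-hardness reuses the $\phi' = \phi \OR v_{n+1}$ reduction from Theorem~\ref{superstar}, invoking the stable set $M$ for the forward direction and the observation that any $\delta^k$-model yields a model with $v_{n+1}=0$ for the converse. Your treatment of the membership step (tracking the $O(n^{2k})$ branching explicitly) and of the converse (splitting on the value of $Y(v_{n+1})$) is in fact somewhat more carefully spelled out than the paper's version.
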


\begin{proof} When $k=0$, this is Cook's Theorem~\cite{GJ79}, so assume that $k \geq 1$.
  First observe that $\de^k$-SAT is in NP.  This is because an NDTM can
  guess an assignment $X$ and check that it is a $\de^k(1,1)$-model:
  to check whether $X$ is a $\de^k(1,1)$-model, it suffices to
  consider all possible $n^k$ break sets, and check that a repair
  exists for each break applied in sequence from the break set.  Since
  $k$ is fixed, this can be done in polynomial time.

  To prove that $\de^k$-SAT is NP-hard, we use, once again, the proof
  technique used in~\citeA{gins} to prove Theorem~\ref{NPC}. Given an
  instance $\phi$ of SAT, defined on $n$ variables
  $v_1, v_2, \ldots, v_n$, we construct $\phi'= \phi \OR v_{n+1}$
  (and modify $\phi'$ to a CNF formula), where $v_{n+1}$ is a newly
  introduced variable. The argument used in Theorem~\ref{superstar} can now
  be used to prove that $\phi$ is satisfiable iff $\phi'$ has a $\de^k$-model.
  In particular, we  construct a stable set of models $M$ for $\phi'$ from a single model of $\phi$.
  Since a $\delta^*$-model is a $\de^k$-model, this proves that if $\phi$ is satisfiable, then $\phi'$ has a $\delta^*$-model.
  The other direction also follows: if $\phi'$ has a $\delta^k$-model then it has a model with $v_{n+1}$ set to $0$. The restriction
  of that model to $v_1, \ldots, v_n$ is a model of $\phi$.
\end{proof}

\section{Finding $\delta$-models for Restricted Boolean Formulas}\label{section-restrict}
  
In this section, we consider the complexity of $\de(r,s)$-SAT for
restricted classes of SAT formulas which are known to have
polynomial-time algorithms for satisfiability: $2$-SAT, Horn-SAT,
dual-Horn-SAT, $0$-valid SAT, $1$-valid SAT and Affine-SAT.  We
observe that these problems have different complexity of testing fault
tolerance.  For example, $2$-SAT and Affine-SAT have polynomial time
tests for the existence of $\delta$-models (see Section~\ref{subsection-2sat} and
\ref{sec:finding-de-models-affine}) whereas the same problem is NP-complete for
Horn-SAT (Section~\ref{subsection-horn}).

\subsection{Finding $\de$-models for $2$-SAT} \label{subsection-2sat}

We now prove that finding $\delta$-models for $2$-SAT formulas is in
polynomial time. We give two independent proofs: the first proof (Section~\ref{subsubsection-2sat1})
exploits the structure of the formula and the second proof (suggested
by a referee) uses CSP (constraint satisfaction problem) techniques (Section~\ref{subsubsection-2sat2}).
In contrast, we show that finding $\de(1,s)$-models for $2$-SAT formulas is NP-complete for $s \geq 2$
(Section~\ref{subsubsection-2sat3}). However, we also show that finding $\de^*$-models for $2$-SAT formulas is in polynomial time (Section~\ref{subsubsection-2sat4}).

\bigskip

\subsubsection{Polynomial time algorithm for $\de(1,1)$-$2$-SAT} \label{subsubsection-2sat1}

\noindent\textbf{Notation:}
Let $\phi$ be an instance of $2$-SAT. Following the notation
in~\citeA{papa}, we define the directed graph $G(\phi)=(V,E)$ as
follows: the vertices of the graph are the literals of $\phi$ and for
each clause $l_i \rightarrow l_j$ (where $l_i, l_j$ are literals), there are
two directed edges $(l_i, l_j)$ and $(\NOT l_j,\NOT l_i)$ in 
$E$. A \emph{path} in $G(\phi)$ is an ordered sequence of vertices
$(l_1, l_2, \ldots, l_r)$ where $(l_i, l_{i+1}) \in E$ for $1 \leq i
\leq r-1$.  We define a
\emph{simple path} in $G(\phi)$ to be a path $(l_1, l_2, \ldots, l_r)$
where the literals $l_i$ involve distinct variables, i.e., $l_i \not=
l_j$ and $l_i \not= \neg l_j$ for all $i \not= j$, where  $1 \leq i, j \leq r$.
A simple cycle of $G(\phi)$ is a simple path where we allow the start
and end vertices to be identical. A \emph{source vertex} (resp. a
\emph{sink vertex}) in $G(\phi)$ is a vertex with in-degree
(resp. out-degree) $0$. A vertex $l$ in $G(\phi)$ is said to be a
$k$-ancestor (resp. $k$-descendant) if there exists a simple path
$(l,l_1, l_2, \ldots, l_k)$ (resp. $(l_1, l_2, \ldots, l_k, l)$) of
length $k$ in $G(\phi)$.

\bigskip

The following well-known lemma provides a necessary and sufficient
condition for a $2$-SAT formula to be satisfiable.

\begin{lemma}\label{sat2}\cite{papa}
A $2$-SAT formula   $\phi$ is unsatisfiable iff there is a variable $x$ appearing in $\phi$  such that there
is a path from $x$ to $\NOT x$ and a path from $\NOT x$ to $x$ in $G(\phi)$.
\end{lemma}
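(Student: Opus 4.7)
The plan is to exploit the semantic correspondence between edges in $G(\phi)$ and logical implications. A clause written as $l_i \rightarrow l_j$ contributes both the edge $(l_i, l_j)$ and its contrapositive $(\NOT l_j, \NOT l_i)$, so a directed path from literal $a$ to literal $b$ encodes a derivation $a \Rightarrow b$ that is forced in every satisfying assignment. Moreover, the construction is symmetric: every path $a \to \cdots \to b$ in $G(\phi)$ is mirrored by a path $\NOT b \to \cdots \to \NOT a$.

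For the forward direction, suppose paths exist from $x$ to $\NOT x$ and from $\NOT x$ to $x$. Any candidate assignment $X$ sets $x$ to either $1$ or $0$. If $X(x)=1$, the chain of implications along the path from $x$ to $\NOT x$ forces $X(\NOT x)=1$, a contradiction; the case $X(x)=0$ is handled symmetrically using the reverse path. Hence $\phi$ admits no satisfying assignment.

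For the converse I would prove the contrapositive by explicitly constructing a satisfying assignment when no such $x$ exists. The hypothesis is equivalent to the statement that no strongly connected component (SCC) of $G(\phi)$ contains both a literal and its negation. Compute the condensation of $G(\phi)$ into a DAG of SCCs and process these SCCs in reverse topological order. When an unassigned SCC $C$ is encountered, set every literal in $C$ to $1$ and every literal in the dual SCC $\overline{C}$ (containing the negations of the literals of $C$) to $0$. The symmetry of $G(\phi)$ guarantees that $\overline{C}$ is itself an SCC, and by hypothesis $C \neq \overline{C}$, so the procedure is well-defined.

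The main obstacle is verifying that this procedure yields a model of $\phi$. For each edge $(l, l')$ of $G(\phi)$, one must check that $X(l) = 1$ implies $X(l') = 1$. By the assignment rule, $X(l)=1$ is equivalent to $C_l$ occurring topologically after $\overline{C_l}$. The edge $(l, l')$ places $C_l$ topologically before $C_{l'}$ (or makes them coincide, in which case $l$ and $l'$ receive identical values), and by the symmetry above the edge $(\NOT l', \NOT l)$ places $\overline{C_{l'}}$ topologically before $\overline{C_l}$. Chaining these relations yields $\overline{C_{l'}}$ before $C_{l'}$, so $X(l')=1$ as desired. The critical ingredient is that the involution $C \mapsto \overline{C}$ reverses the partial order on the condensation DAG, and together with the hypothesis $C \neq \overline{C}$ this is what makes the rule \emph{``$C$ true, $\overline{C}$ false''} globally consistent.
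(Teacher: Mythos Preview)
Your argument is correct and is essentially the classical Aspvall--Plass--Tarjan strongly-connected-component construction, which is exactly the proof that Papadimitriou gives in the reference cited. Note, however, that the paper itself does not supply a proof of this lemma at all: it simply quotes the result from \cite{papa} and moves on, so there is no in-paper argument to compare yours against. One small clarification worth making explicit in your write-up: when you say ``$X(l)=1$ is equivalent to $C_l$ occurring topologically after $\overline{C_l}$,'' the phrase ``topologically after'' must be read as ``later in the fixed linear extension you chose for the reverse traversal,'' since $C_l$ and $\overline{C_l}$ need not be comparable in the condensation DAG's partial order; with that reading your chaining $\overline{C_{l'}} \leq \overline{C_l} < C_l \leq C_{l'}$ goes through cleanly.
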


If $\phi$ has a $\delta$-model, then $G(\phi)$ has further
restrictions.

\begin{lemma}\label{rest}
 If a $2$-SAT formula $\phi$ has a $\delta$-model, then there is no path from $l$ to $\NOT l$ for any vertex $l$ in $G(\phi)$.
\end{lemma}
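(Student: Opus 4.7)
The plan is to argue by contradiction: suppose $\phi$ has a $\delta$-model $X$ while $G(\phi)$ contains a path $(l_0, l_1, \ldots, l_r)$ with $l_r = \neg l_0$, and derive a contradiction by showing that the break at the bit underlying $l_0$ admits no valid repair.

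First I would observe that the path encodes the chain of implications $l_0 \to l_1 \to \cdots \to l_r$, each arising from the clause $\neg l_i \vee l_{i+1}$ of $\phi$ associated with the edge $(l_i, l_{i+1}) \in E$. Composing these implications yields $\phi \models l_0 \to \neg l_0$, so $l_0$ must be false in every model of $\phi$. In particular $X(l_0)=0$ and $X(l_r)=1$. (The standing assumption that $\phi$ contains no single-literal clauses guarantees $r \geq 2$, since an edge $(l_0,\neg l_0)$ would come from the unit clause $\neg l_0$.)

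Next I would apply Definition~\ref{def:1} to the break at the bit $i$ underlying $l_0$. There must exist a model $Y$ of $\phi$ obtained from $X$ by flipping either only bit $i$ or bit $i$ together with one other bit $j \neq i$. In either case $Y(l_0)=1$ (from the break) and $Y(l_r)=0$, the latter because $l_r = \neg l_0$ involves the same underlying variable and hence is flipped by the break as well. The combinatorial core then follows: the sequence $Y(l_0), Y(l_1), \ldots, Y(l_r)$ begins at $1$ and ends at $0$, so there must exist some index $k$ with $Y(l_k)=1$ and $Y(l_{k+1})=0$. The clause $\neg l_k \vee l_{k+1}$ corresponding to the edge $(l_k, l_{k+1})$ is then falsified by $Y$, contradicting that $Y$ is a model of $\phi$.

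I do not foresee a serious technical obstacle here. The main temptation to avoid is a case analysis on where the ``true-to-false'' jump occurs along the sequence, or on which particular variable is chosen for the repair; these details are irrelevant, because the argument uses only that $Y$ pins $l_0$ and $\neg l_0$ to opposite values, and the chain of implications along the path then forces some clause in between to be violated no matter how the intermediate literals behave under the repair. The argument is also indifferent to whether the path is simple: repetitions of literals or variables among the interior vertices do not affect the endpoint pinning or the must-jump principle.
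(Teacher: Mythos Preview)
Your proof is correct and follows essentially the same approach as the paper: both observe that a path from $l$ to $\neg l$ forces every model to set $l$ to false, so breaking the bit underlying $l$ yields an assignment with $l$ true that no single repair can turn into a model. Your ``must-jump'' argument along the path is a more explicit way of certifying this unrepairability than the paper's one-line appeal to the fact that every model sets $l$ false, but the underlying idea is the same.
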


\begin{proof} If there was a path from $l$ to $\NOT l$ in $G(\phi)$, then any satisfying assignment
has to set $l$ to false. If we now flip the value of the literal $l$ (by flipping the associated
variable), we cannot repair to get a model of $\phi$. 
\end{proof}

\begin{remark} Lemma~\ref{rest} establishes a necessary condition for
  a satisfiable $2$-SAT formula to have a $\delta$-model. Unlike   Lemma~\ref{sat2},
  this condition is not sufficient: consider, for example,  the $2$-SAT formula
  (which also illustrates many of the constraints that have to be satisfied if a $\delta$-model exists):
\[ (v_1 \rightarrow v_2) \AND (v_2 \rightarrow v_3) \AND (v_3 \rightarrow v_4) \AND (v_4 \rightarrow v_5). \]
Any $\delta$-model of this formula has to set $v_1$ to false (otherwise every variable has to be set to true and a break to $v_5$
requires more than one repair). Similarly $v_5$ has to set to $1$, $v_2$ to $0$ and $v_4$ to $1$. No choice of $v_3$
will allow a single repair to a break to both $v_1$ or $v_5$. This formula thus does not have a $\delta$-model, yet
it satisfies the necessary condition of Lemma~\ref{rest}.
\end{remark}

We now establish a necessary and sufficient condition for a model of $2$-SAT formula $\phi$ to be a  $\delta$-model.

\begin{lemma}\label{path}
Let $\phi$ be a satisfiable $2$-SAT formula. Suppose that there is no path from $l$ to $\neg l$ for any vertex $l$ in $G(\phi)$.
Let $X$ be a model of $\phi$. Then $X$ is a $\delta$-model if and only if it satisfies the following conditions:
\begin{enumerate}
\item[(C1)]  Let $\mc{P} = (l_1, l_2, l_3)$ be a simple path in $G(\phi)$ 
of length 2. Then $X(l_1)=0$ and $X(l_3)=1$. 
\item[(C2)]  If $(l_1, l_2)$ and $(l_1, l_3)$ are edges in $G(\phi)$, 
then $X(l_1), X(l_2), X(l_3)$ cannot all be $0$.
\end{enumerate}
\end{lemma}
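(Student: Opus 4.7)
The plan is to prove the two directions of the biconditional separately. Necessity reduces to short case-chasing arguments, while sufficiency requires constructing an explicit repair for every possible break. For the necessity of (C1), I would suppose a simple path $\mc{P} = (l_1, l_2, l_3)$ had $X(l_1) = 1$; then the implications $l_1 \rightarrow l_2 \rightarrow l_3$ along $\mc{P}$ force $X(l_2) = X(l_3) = 1$, and breaking the variable of $l_3$ leaves the clause $\neg l_2 \vee l_3$ unsatisfied. The only single-bit flip that can resatisfy this clause is flipping the variable of $l_2$, but this in turn violates $\neg l_1 \vee l_2$ because $X(l_1) = 1$; since $\mc{P}$ is simple, the three variables are distinct, so no single repair suffices. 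A symmetric argument starting from $X(l_3) = 0$ and breaking $l_1$ forces $X(l_3) = 1$. For (C2), if $X(l_1) = X(l_2) = X(l_3) = 0$, then breaking $l_1$ simultaneously violates the two clauses $\neg l_1 \vee l_2$ and $\neg l_1 \vee l_3$, which share only the already-broken variable of $l_1$, so no single repair can satisfy both.

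For sufficiency, I would take a model $X$ of $\phi$ satisfying (C1) and (C2) and, given any break at a variable $v$ with associated positive literal $l$, produce a repair. By the contrapositive symmetry between $l$ and $\neg l$ as vertices of $G(\phi)$, it suffices to treat $X(l) = 1$. After the break, the clauses that become unsatisfied are exactly those of the form $l \vee m$ with $X(m) = 0$, which correspond to outgoing edges from $\neg l$ landing on $0$-vertices in $G(\phi)$. Since $X(\neg l) = 0$, condition (C2) applied at $\neg l$ permits at most one such edge; if there is none, the break already yields a model, and if there is exactly one edge $(\neg l, m)$, the proposed repair is to flip the variable of $m$.

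The heart of the proof is verifying that this repair introduces no new violation. The only potentially dangerous clauses are those of the form $\neg m \vee p$ with $X(p) = 0$, equivalently outgoing edges $(m, p)$ landing on $0$-vertices. If the path $(\neg l, m, p)$ is simple, then (C1) directly forces $X(p) = 1$, a contradiction. The non-simple possibilities are $p \in \{l, \neg l, m, \neg m\}$: the choices $p = m$ and $p = \neg m$ are ruled out by our standing exclusion of self-loops, tautologies and unit clauses; $p = l$ is excluded because the resulting path $\neg l \rightarrow m \rightarrow l$ would contradict the hypothesis of the lemma (no literal has a path to its negation) applied to the literal $\neg l$; and $p = \neg l$ is harmless because the break has already set the new value of $\neg l$ to $1$, automatically satisfying the clause. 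The case $X(l) = 0$ follows by replaying the argument on $\neg l$.

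The main obstacle is precisely this last case analysis: the non-simple cases only collapse once all the hypotheses, (C1), (C2), the standing assumptions on clause form, and the no-path-to-negation hypothesis, are used simultaneously. Getting the enumeration right requires fluency in translating between an edge $(a,b) \in E$ and its contrapositive $(\neg b, \neg a)$, since the same clause appears as two edges of $G(\phi)$, and each path constraint has to be interpreted from both the forward and the contrapositive direction when applying (C1).
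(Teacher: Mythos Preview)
Your proposal is correct and follows essentially the same approach as the paper: both directions are argued the same way, with (C2) pinning down a unique violated clause after a break and (C1) together with the no-path-to-negation hypothesis guaranteeing that the one-bit repair creates no new violation. Your case analysis for the non-simple possibilities (in particular the $p=\neg l$ case) is in fact slightly more explicit than the paper's, which silently relies on the fact that after the break the corresponding clause is already satisfied.
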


\begin{proof}  ($\Rightarrow$) Suppose $X$ is a $\delta$-model of $\phi$.
Let $P=(l_1, l_2, l_3)$ be a simple path of length $2$. If $X(l_1)=1$,
then  $X(l_2)=X(l_3)=1$, otherwise $X$ cannot be a model of $\phi$. A break to $l_3$  requires
the values of both $l_1$ and $l_2$ to be flipped so $X$ cannot be a $\delta$-model, a contradiction.
So $X(l_1)=0$. Similar arguments show that $X(l_3)=1$. Condition (C2) holds similarly: if $X(l_1), X(l_2), X(l_3)$
were all false, then a break to $l_1$ would require two repairs (both $l_2$ and $l_3$). Hence one of them has to be set to true.

($\Leftarrow$) Let $X$ be a model of $\phi$ which satisfies conditions
(C1) and (C2). We show that $X$ is actually a $\delta$-model.  Suppose
not; say a break to a variable $v$ is not repairable by
at most one other bit flip.  Assume without loss of generality, that
$X(v)=0$ and so after the break, $v$ is set to $1$. There must be at
least one clause of the form $v \rightarrow l$ where $l$ is a literal,
with $X(l)=0$, otherwise the break does not need a response.
 If there is more
than $1$ such clause, say clauses $v \rightarrow l$ and $v \rightarrow
l'$ with $X(l)=X(l')=0$, then $X$ violates condition (C2),
contradicting the hypothesis. So there is exactly one clause of the
form $v \rightarrow l$ with $X(l)=0$ and moreover, it must be the case
that flipping $l$ does not produce a model of $\phi$ (then one repair
would have sufficed). Now since a
flip of the variable associated with $l$ repairs the clause $v
\rightarrow l$, there must be other clauses that break when $l$ is
repaired. Such a clause must be of the form $l \rightarrow l'$ for
some literal $l'$ with $X(l')=0$. We know that $l'$ cannot be $\neg v$
since then we would have a path between $v$ and $\neg v$ in $G(\phi)$,
which violates the hypothesis. Our assumption that each clause has distinct literals implies that  $l' \not= \neg l$. Hence
$(v,l,l')$ is a simple path such that $X(v)=0$ and $X(l')=0$,
contradicting condition (C1).  Hence $X$ is a $\delta$-model.
\end{proof}

\begin{remark}
\begin{description}
\item[(i)]  If $\phi$ has a $\delta$-model, then it is indeed the case that if
  $(v,u)$ and $(w,u)$ are edges of $G(\phi)$, then $u, v$ and $w$
  cannot all be set to true in such a $\delta$-model (since a break to
  $u$ is not repairable by a single flip).  We do not need to include
  this condition explicitly in Lemma~\ref{path}, because this
  condition happens if and only if $(\NOT u, \NOT v)$ and $(\NOT u,
  \NOT w)$ satisfy condition (C2) in Lemma~\ref{path}.
 \item[(ii)] If $\phi$ has a $\delta$-model, then condition (C1) can be extended to specify the values of literals (vertices)
on any path of length $3$ (the maximum possible length, see  Corollary~\ref{restrict1} below)  as follows: if $(u_1, u_2, u_3, u_4)$ is a simple path, then 
 apply condition (C1) twice to get $X(u_1)=X(u_2)=0$ and $X(u_3)=X(u_4)=1$. Thus we do not include this condition explicitly.
\end{description}
\end{remark}

\bigskip

Lemma~\ref{path} has further consequences for $G(\phi)$:

\medskip

\begin{corollary}\label{restrict1}
 If a $2$-SAT formula $\phi$ has a $\delta$-model, then $G(\phi)$ satisfies the following properties:
\begin{itemize}
\item[(i)] The longest simple path in $G(\phi)$ has length at most $3$.
\item[(ii)] The longest simple cycle in $G(\phi)$ has length at most $2$.
\item[(iii)] A vertex $v$ can take part in at most $1$ simple cycle.
\end{itemize}
\end{corollary}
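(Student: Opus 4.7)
The plan is to derive all three properties from Lemma~\ref{path}, applied to a hypothetical $\delta$-model $X$ of $\phi$. The common mechanism is that condition (C1) pins down the truth values at both endpoints of every length-$2$ simple path, so two overlapping such paths that share a vertex will yield contradictory demands on $X$.

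For (i), I would suppose a simple path $(l_1,l_2,l_3,l_4,l_5)$ of length $4$ exists in $G(\phi)$. Its two length-$2$ subpaths $(l_1,l_2,l_3)$ and $(l_3,l_4,l_5)$ are themselves simple, so (C1) forces $X(l_3)=1$ from the first and $X(l_3)=0$ from the second, a contradiction.

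For (ii), given a simple cycle $(l_1,\ldots,l_k,l_1)$ with $k\geq 3$, the vertices $l_1,\ldots,l_k$ involve pairwise distinct variables, so both $(l_1,l_2,l_3)$ and $(l_{k-1},l_k,l_1)$ are simple paths of length $2$. Applying (C1) to each pins $X(l_1)$ to $0$ and to $1$ simultaneously. Note that (i) alone does not suffice here, since the cycle can have length as small as $3$ and need not embed a length-$4$ simple path. By (ii), every simple cycle through a vertex has length at most $2$, and in fact exactly $2$, since length-$1$ loops would correspond to excluded unit/tautological clauses.

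For (iii), suppose $v$ lies on two distinct simple cycles; by the preceding observation they have the form $(v,u_1,v)$ and $(v,u_2,v)$ with $u_1\neq u_2$. The two edges in each cycle encode the implications $v\to u_i$ and $u_i\to v$, so every model satisfies $X(v)=X(u_1)=X(u_2)$. If this common value is $0$, the edges $(v,u_1),(v,u_2)$ with all three literals false violate (C2). If the common value is $1$, I pass to the dual edges of $G(\phi)$: the edges $(u_i,v)$ give rise to edges $(\neg v,\neg u_i)$ by construction, and under $X$ the three negated literals $\neg v,\neg u_1,\neg u_2$ are all $0$, so (C2) is violated at $\neg v$.

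The main obstacle I expect is bookkeeping rather than conceptual: for (ii) and (iii) I must verify that the vertices invoked involve pairwise distinct variables so that the paths really are simple and (C2) applies nontrivially. Lemma~\ref{rest} (no path from $l$ to $\neg l$), together with the standing assumption that $\phi$ has no unit clauses and no variable repeated within a clause, is what rules out degenerate possibilities such as $u_i=\neg v$ or $u_1=\neg u_2$ (the latter would already force $\phi$ to be unsatisfiable via the derived equalities $X(u_1)=X(u_2)$).
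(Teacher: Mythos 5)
Your proposal is correct and follows the same approach as the paper: derive a contradiction by applying the conditions of Lemma~\ref{path} to overlapping length-$2$ segments of a hypothetical long path or cycle. The paper only spells out part (i) and dismisses (ii) and (iii) with ``similar arguments,'' so your explicit treatment of those cases (including the degenerate possibilities $u_i=\neg v$ and $u_1=\neg u_2$) supplies detail the paper omits; note also that the paper's printed proof of (i) cites condition (C2) where it plainly means (C1), and your version uses the correct condition.
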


\begin{proof}
Suppose that there is a simple path $(l_1, l_2, l_3, l_4, l_5)$ of length $4$ in $G(\phi)$.
If $X$ is a $\delta$-model of $\phi$, Lemma~\ref{path} implies that $X(l_3)=1$ when we apply (C2) to  the segment $(l_1, l_2, l_3)$
and $X(l_3)=0$ when we apply (C2) to the segment $(l_3,l_4, l_5)$. Hence such a $\delta$-model cannot exist.
The other conditions follow from similar arguments.
\end{proof}

Pseudo-code for our algorithm is given in Algorithm~(\ref{algo}).
Observe that Algorithm~(\ref{algo}) is a polynomial time reduction
from $\delta$-$2$-SAT to the satisfiability question of a new $2$-SAT
formula $\phi_B$. Proof of correctness follows.

\begin{algorithm} 
\caption{Algorithm for $\delta$-2-SAT} \label{algo}
\begin{algorithmic}[1]
\bigskip
\State \textbf{Input}: $2$-SAT formula $\phi$
\State \textbf{Output}: True if $\phi$ has a $\delta(1,1)$-model, false otherwise

\bigskip

  \If{$\phi$ is not satisfiable} 
   \State return false.
  \EndIf

  \bigskip

  \State \emph{/*\  Check if necessary condition  holds  (Lemma~\ref{rest})\  */}
  \State Construct $ G(\phi)$ 
  \If{there is a path in $G(\phi)$ between $l$ and $\neg l$ for any literal $l$}
   \State  return false.
  \EndIf

  \bigskip
   \State $\phi_B \leftarrow \phi $ 

   \bigskip
    
  \State \emph{/*\ Enforce condition (C1) from Lemma~\ref{path} \ */}
\ForAll{$2$-ancestor vertex $l$ in $G(\phi)$} 
  \State $\phi_B \leftarrow  \phi_B \AND (\neg l) $
   \EndFor

  \bigskip
  
  \State \emph{/* \ Force each source (resp. sink) vertex to value $0$ (resp. $1$) \ */}
  \ForAll{source vertices $l$ in  $G(\phi)$}
  \State $\phi_B \leftarrow \phi_B \AND (\neg l) $
  \EndFor

  \ForAll{sink vertices $l$ in $G(\phi)$}
  \State  $\phi_B \leftarrow \phi_B \AND (l)$
  \EndFor

  \bigskip

  \State \emph{/* \ Enforce condition (C2) from Lemma~\ref{path} \ */}
    \ForAll{$1$-ancestors $l$  in $G(\phi)$}
    \ForAll{ pairs of distinct vertices $l_1, l_2$ } 
     \If{$(l,l_1), (l,l_2)$ are edges in  $G(\phi)$}
       \State $\phi_B \leftarrow\phi_B \AND (l_1 \OR l_2)$
     \EndIf
    \EndFor
  \EndFor

  \bigskip

  \If{$\phi_B$  is satisfiable}
  \State return true
   \Else 
   \State return false
   \EndIf

\end{algorithmic}
\end{algorithm}

We first need to prove the following easy lemma.

\begin{lemma} \label{changedmodel}
 If a $2$-SAT formula $\phi$ has a $\de$-model, then it has a $\de$-model
with each source vertex (respectively, sink vertex) in $G(\phi)$ set to false (resp. true).
\end{lemma}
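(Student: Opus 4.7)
The plan is to start from any $\de$-model $X$ of $\phi$ and iteratively flip the value of each source literal $l$ of $G(\phi)$ that currently has $X(l)=1$, showing that each such flip yields another $\de$-model. A key structural fact I would exploit is that, because the edges of $G(\phi)$ come in the paired form $(u,v)$ and $(\neg v,\neg u)$, a literal $l$ is a source exactly when $\neg l$ is a sink. So any $\de$-model that sets every source to false automatically sets every sink to true, and it suffices to handle the source half of the statement.

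For the flip itself, I would first check that the modified assignment $X'$ is still a model of $\phi$. Since $l$ is a source, no clause of $\phi$ contains $l$ positively --- such a clause $l\OR l'$ would create the incoming edge $(\neg l',l)$ --- so the only clauses touching the variable underlying $l$ have the form $\neg l\OR l'$, all of which become satisfied once $\neg l$ is flipped to true. Also, by the standing assumption that every variable appears in both polarities, no variable underlies two source literals at once, so distinct source flips do not interfere and the flipping process terminates.

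It then remains to verify that $X'$ continues to satisfy conditions (C1) and (C2) of Lemma~\ref{path}, which by that lemma is equivalent to being a $\de$-model. Having no incoming edge, $l$ can occur on a simple path $(l_1,l_2,l_3)$ only as $l_1$, where (C1) demands value $0$; symmetrically, $\neg l$ occurs only as $l_3$, where (C1) demands value $1$. The flip produces exactly these values and leaves every other (C1) constraint intact. For (C2), the source $l$ can appear only as the common tail $l_1$ of two outgoing edges $(l,l_2)$ and $(l,l_3)$, and this is where I expect the main subtlety: flipping $l$ from $1$ to $0$ could a priori create an all-zero triple at $(l,l_2,l_3)$. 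The obstacle is resolved by invoking that $X$ was already a model --- the clauses $\neg l\OR l_2$ and $\neg l\OR l_3$ force $X(l_2)=X(l_3)=1$ whenever $X(l)=1$, so those coordinates remain $1$ in $X'$ and (C2) survives. Iterating over all sources with value $1$ (the count strictly decreases at each step) then yields the desired $\de$-model.
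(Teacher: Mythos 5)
Your proof is correct and takes essentially the same route as the paper: both modify the given $\de$-model by forcing sources to $0$ (equivalently sinks to $1$) and then re-verify conditions (C1) and (C2) of Lemma~\ref{path}. The only difference is that you perform the flips one source at a time with a direct check at each step, whereas the paper changes all sinks/sources simultaneously and argues by contrapositive that any violation of (C1) or (C2) in the new assignment would already have been present in the original $\de$-model.
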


\begin{proof}
  Modify a $\delta$-model $X$ of $\phi$ by setting each sink vertex to
  $1$ (and hence each source vertex to $0$).  Let the new assignment
  be $X'$. Clearly, $X'$ is still a model of $\phi$ (setting the
  antecedent $p$, or the consequent $q$, to $0$, or $1$ respectively, satisfies every
  implication $p \rightarrow q$). We show that this model satisfies
  condition (C1) and (C2) of Lemma~\ref{path}, thus proving that it is
  a $\delta$-model. If condition (C1) is violated, then there is some
  simple path $(l_1, l_2, l_3)$ in $G(\phi)$ where $X'(l_1)=1$ or
  $X'(l_3)=0$.  If $X'(l_1)=1$, then $X(l_1)=1$ (suppose not and let
  $X(l_1)=0$: since $(l_1,l_2)$ is an edge in $G(\phi)$, $l_1$ is not
  a sink vertex, so its value would not have been changed). Similarly,
  $X'(l_3)=0$ would imply that $X(l_3)=0$.  Thus $X$ would violate
  condition (C1) with respect to the simple path $(l_1, l_2, l_3)$ and
  could not have been a $\delta$-model (a contradiction). Condition
  (C2) similarly holds.
\end{proof}

Algorithm (1) adds literals to the input $2$-SAT formula $\phi$ to
enforce variable assignments that \emph{must} hold if $\phi$ has a
$\delta$-model (see Lines $12$--$15$, $24$--$30$ in the body of Algorithm (1)). Since we are
guaranteed by Lemma~\ref{path} that these conditions are a necessary
and sufficient condition for the existence of a $\delta$-model, the
satisfiability of the resulting Boolean formula would imply that
$\phi$ has a $\delta$-model. To simplify the proof of correctness
(which is now simply Corollary~\ref{cor:1} below), we enforce that
source and label vertices get default values prescribed by
Lemma~\ref{changedmodel}.

\begin{corollary} \label{cor:1}
The formula $\phi_B$ is satisfiable iff $\phi$ has a $\delta$-model. 
\end{corollary}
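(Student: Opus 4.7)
The plan is to prove both directions by decomposing $\phi_B$ as $\phi$ conjoined with three batches of clauses and showing that each batch corresponds exactly to one of the structural requirements in Lemma~\ref{path} and Lemma~\ref{changedmodel}. Since Algorithm~\ref{algo} returns false before reaching the satisfiability test of $\phi_B$ unless $\phi$ is already satisfiable and $G(\phi)$ contains no path from any literal to its negation, the hypotheses of both Lemma~\ref{rest} and Lemma~\ref{path} are guaranteed to hold when we invoke them.

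For the forward direction ($\phi_B$ satisfiable implies $\phi$ has a $\delta$-model), I would take an arbitrary model $X$ of $\phi_B$; since $\phi_B$ extends $\phi$, $X$ is already a model of $\phi$, so it remains to check that $X$ satisfies conditions (C1) and (C2) of Lemma~\ref{path}. Given a simple path $(l_1, l_2, l_3)$, the head $l_1$ is a $2$-ancestor, so the unit clause $(\neg l_1)$ added on Lines $12$--$15$ forces $X(l_1)=0$. For the tail, I would use the contrapositive closure of $G(\phi)$: the reversed sequence $(\neg l_3, \neg l_2, \neg l_1)$ is also a simple path, so $\neg l_3$ is a $2$-ancestor and the added clause $(\neg \neg l_3) = (l_3)$ forces $X(l_3)=1$. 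For (C2), every binary clause $(l_1 \OR l_2)$ added on Lines $24$--$30$ for edges $(l, l_1), (l, l_2)$ already implies that $l_1$ and $l_2$ are not both false, which is stronger than the requirement that $l, l_1, l_2$ cannot all be false. Hence Lemma~\ref{path} yields that $X$ is a $\delta$-model of $\phi$.

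For the reverse direction ($\phi$ has a $\delta$-model implies $\phi_B$ is satisfiable), I would start with any $\delta$-model of $\phi$ and invoke Lemma~\ref{changedmodel} to obtain a $\delta$-model $X'$ in which every source vertex of $G(\phi)$ is assigned $0$ and every sink is assigned $1$, so the unit clauses added in the source/sink block are satisfied. Lemma~\ref{path} then gives that $X'$ satisfies (C1) and (C2). Each unit clause $(\neg l)$ added for a $2$-ancestor $l$ is immediate from (C1) applied to a defining simple path of length $2$ starting at $l$. Each binary clause $(l_1 \OR l_2)$ corresponding to edges $(l,l_1), (l,l_2)$ is satisfied by a case split: if $X'(l)=1$, then the implications $l \rightarrow l_1$ and $l \rightarrow l_2$ encoded by the edges force $X'(l_1)=X'(l_2)=1$, while if $X'(l)=0$, then (C2) yields $X'(l_1) \OR X'(l_2) = 1$. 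Thus $X'$ satisfies every clause of $\phi_B$.

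The only subtle step, and hence the main obstacle, is the tail half of (C1): the $2$-ancestor loop of Algorithm~\ref{algo} only explicitly forces the head of each length-$2$ simple path to $0$, while (C1) also demands that the tail be set to $1$. This is handled via the contrapositive edges built into $G(\phi)$, which turn tails of simple paths into heads of their negated reversals, so the same loop supplies both constraints. Beyond this observation, the proof is essentially a line-by-line matching between the clauses appended by Algorithm~\ref{algo} and the conditions of Lemmas~\ref{rest}, \ref{path}, and~\ref{changedmodel}.
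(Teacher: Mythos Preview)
Your proposal is correct and follows exactly the route the paper intends: the paper's own proof is the single line ``Immediate from Lemma~\ref{path} and Lemma~\ref{changedmodel},'' and what you have written is simply the unpacking of that line, matching each block of clauses in $\phi_B$ to the corresponding condition. Your identification of the one non-obvious point---that the $2$-ancestor loop forces \emph{both} endpoints of (C1) because the contrapositive edges make $\neg l_3$ a $2$-ancestor whenever $l_3$ ends a length-$2$ simple path---is precisely the detail the paper suppresses.
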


\begin{proof} Immediate from Lemma~\ref{path} and Lemma~\ref{changedmodel}.
\end{proof}

\medskip

\begin{example}\label{ex:4}
Let $\phi$ be the $2$-SAT formula:
 \begin{gather*}
(v_1 \rightarrow v_2) \AND (v_2 \rightarrow v_3) \\
(v_1 \rightarrow v_4) \AND (v_4 \rightarrow v_3) \\
(v_1 \rightarrow v_5) \AND (v_5 \rightarrow v_3)
\end{gather*}
Then Algorithm~(\ref{algo}) constructs $\phi_B$ where
 \begin{gather*}
\phi_B =   \phi \AND \\
    (\neg v_1) \AND (v_3)  \mbox{\ \ \ \ \ \ (\emph{added by lines 13--16 in Algorithm (1)})} \\
   \AND (v_2 \OR v_4) \AND (v_2 \OR v_5) \AND (v_4 \OR v_5)   \mbox{\ \ \ \ \ (\emph{added by lines 24--31})} \\
  \AND (\neg v_2 \OR \neg v_4) \AND (\neg v_2 \OR  \neg v_5) \AND (\neg v_4 \OR \neg v_5) \mbox{\ \ \ (\emph{added by lines 24-31})}
\end{gather*}
Note that in the construction of $G(\phi)$, $\neg v_3$ is a $2$-ancestor. 
Since  two of the variables $v_2, v_4, v_5$ have to be set to the same value, $\phi_B$ is unsatisfiable.
Hence $\phi$ does not have a $\delta$-model.
\end{example}

\medskip

\begin{theorem} \label{2sat-poly}
 In polynomial time, one can determine if a $2$-SAT formula has a $\delta$-model
and find one if it exists.
\end{theorem}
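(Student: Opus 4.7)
The plan is to observe that Theorem~\ref{2sat-poly} follows almost directly from Algorithm~\ref{algo} together with Corollary~\ref{cor:1}, so the remaining work is to check (a) that Algorithm~\ref{algo} runs in polynomial time, (b) that $\phi_B$ is itself a $2$-SAT instance of polynomial size, and (c) that a $\delta$-model of $\phi$ can actually be extracted from any satisfying assignment of $\phi_B$.

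For (a), I would argue as follows. The graph $G(\phi)$ has $2n$ vertices (one per literal) and $O(m)$ edges, where $n$ and $m$ are the variable and clause counts of $\phi$, and can be built in polynomial time. Testing whether some literal $l$ has a directed path to $\neg l$ (the necessary condition of Lemma~\ref{rest}) reduces to computing the strongly connected components of $G(\phi)$, which is polynomial. The initial satisfiability check is polynomial by Lemma~\ref{sat2}. Enumerating source vertices, sink vertices, and $k$-ancestors for $k \leq 2$ is straightforward: for each vertex, a bounded DFS of depth at most $2$ that checks that the visited literals involve distinct variables determines $k$-ancestor status.

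For (b), Lines~13--15 and 17--22 add at most $2n$ unit clauses in total, one per offending vertex. Lines~24--30 add, for each $1$-ancestor $l$, at most $\binom{d(l)}{2}$ binary clauses of the form $(l_1 \OR l_2)$, where $d(l)$ is the out-degree of $l$, yielding $O(n^3)$ binary clauses overall. Hence $\phi_B$ is a $2$-SAT formula of size polynomial in $|\phi|$, and its satisfiability can be decided in polynomial time by another application of Lemma~\ref{sat2}. For (c), Corollary~\ref{cor:1} already gives the decision equivalence, and any satisfying assignment of $\phi_B$ satisfies $\phi$ together with conditions (C1) and (C2) of Lemma~\ref{path} and the canonical source/sink values prescribed by Lemma~\ref{changedmodel}, so such an assignment is itself a $\delta$-model of $\phi$.

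There is no genuine obstacle here, since the substantive structural work was carried out in Lemmas~\ref{path} and~\ref{changedmodel}. The only item requiring care is the polynomial size bound on $\phi_B$, which is handled by the accounting above; the remainder is just verifying that each individual step of Algorithm~\ref{algo} admits a polynomial-time implementation, after which the theorem follows.
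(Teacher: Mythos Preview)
Your proposal is correct and follows essentially the same approach as the paper: invoke Corollary~\ref{cor:1} for correctness and then bound the running time of Algorithm~\ref{algo}. The paper's own proof is considerably terser---it simply cites that $2$-SAT is in P and observes that the remaining work amounts to looping over simple paths of length at most $3$ in $O(n^3)$ time---so your breakdown into (a), (b), (c), with an explicit clause count for $\phi_B$ and the observation that any model of $\phi_B$ automatically satisfies (C1) and (C2) and is therefore a $\delta$-model, is more detailed but not substantively different.
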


\begin{proof}
 Satisfiability of a $2$-SAT formula is in $P$ \cite{papa}. Other steps in the procedure
consist of looping over simple paths of length $3$, which can be done in time $O(n^3)$ where
$n$ is the number of variables.  
\end{proof}

\begin{remark}
It is possible to further characterize the space complexity of $\de(1,1)$-2-SAT.
In fact,  $\delta(1,1)$-2-SAT is complete for NL (non-deterministic log space).
To see that $\de(1,1)$-2-SAT is in NL, observe that Algorithm~(\ref{algo}) can be
executed in space logarithmic in the input. Completeness can be established
via a log-space reduction from $2$-SAT.
Since this result is not very relevant in the present context, we leave the details out.
\end{remark}

\subsubsection{An alternative proof of Theorem~\ref{2sat-poly}} \label{subsubsection-2sat2}

An alternative proof of
Theorem~\ref{2sat-poly} was suggested by one of the reviewers. It is
possible to cast any satisfiability problem as a constraint
satisfaction problem (CSP) over binary variables. This transformation,
particularly when the input instance is a $2$-SAT problem, produces a
CSP for which local consistency (consistency of subproblems involving
fewer variables) ensures the presence of a global solution. In this
framework, asserting that a Boolean formula has a $\delta$-model
becomes particularly convenient. 

\medskip

\noindent\textbf{Notation:} Let $\phi$ be a Boolean formula in CNF.  For a subset $S$ of
variables, we let $\phi(S)$ denote the subformula of $\phi$ consisting of
clauses from $\phi$ which only involve variables in $S$.

\medskip

\begin{definition}
  A formula $\phi$ is said to be \emph{$k$-consistent} if for every subset $S$
  of $k-1$ variables, every model of $\phi(S)$ can be extended to a model
  of $\phi(S \cup \{v\})$ for every variable  $v$ (i.e., a larger subformula of
  $\phi$ involving one more variable). A formula is \emph{strong
  $k$-consistent} if it is $i$-consistent for all $i$, $1 \leq i \leq
  k$.
\end{definition}

\begin{remark} The concept of $k$-consistency has other equivalent formulations \cite{jeavons98,dechter92}. Since our goal in this paper is to
  study satisfiability exclusively, we rephrase some of the
  definitions and theorems to apply to our present context. 
\end{remark}

\begin{theorem} \cite{dechter92}\label{dech} Let $\phi$ be a $2$-SAT
  formula. Then the following hold:
\begin{itemize} 
\item[(a)] If $\phi$ is strong $3$-consistent, then $\phi$ is
  satisfiable and for any $2$ element set $S$, $\phi(S)$ is satisfiable.
\item[(b)] In polynomial time (see e.g., \cite{jeavons98}) one can check whether
  $\phi$ is strong $3$-consistent.  If $\phi$ is satisfiable but not
  strong $3$-consistent, then one can add extra clauses (also in $2$-CNF) to
  $\phi$ in polynomial time such that the resulting $2$-SAT formula is
  strong $3$-consistent.
\end{itemize}
\end{theorem}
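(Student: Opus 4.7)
The plan is to prove (a) by an incremental construction of a satisfying assignment. Order the variables as $v_1, \ldots, v_n$ and build a partial model $X$ one coordinate at a time, maintaining the invariant that $X$ restricted to $\{v_1, \ldots, v_k\}$ satisfies $\phi(\{v_1, \ldots, v_k\})$. The only nontrivial step is the extension: given such an $X$ on $\{v_1, \ldots, v_{k-1}\}$, produce a value for $v_k$ that respects every clause of $\phi$ involving $v_k$. Suppose no such value exists; then setting $v_k = 0$ violates some clause $C_0$ of $\phi$ and setting $v_k = 1$ violates some clause $C_1$. Because $\phi$ is $2$-CNF, each $C_b$ is either a unit clause on $v_k$ or a binary clause whose other literal is on some already-assigned $v_{j(b)}$. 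Strong $3$-consistency supplies a contradiction in each sub-case: both $C_0, C_1$ unary contradicts $1$-consistency on $\{v_k\}$; the single-partner case $j(0) = j(1) = j$ contradicts $2$-consistency on $\{v_j, v_k\}$; and the case $j(0) \ne j(1)$ contradicts $3$-consistency applied to the consistent pair $(v_{j(0)} = X(v_{j(0)}), v_{j(1)} = X(v_{j(1)}))$, which must extend to some value of $v_k$. Satisfiability of $\phi(S)$ for every $2$-element $S$ is then immediate from $2$-consistency.

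For (b), the plan is a standard fixed-point closure. Maintain a $2$-CNF formula $\psi$ initialized to $\phi$, and repeatedly scan singletons, pairs, and triples of variables. Whenever a consistent assignment to one variable cannot be extended to a second, add the ruling-out $2$-clause; whenever a consistent assignment to two variables cannot be extended to a third, add the unique $2$-clause that forbids the offending pair; if a unit-clause conflict is derived, report unsatisfiability. Every added clause is logically implied by $\phi$, so the model set of $\psi$ equals that of $\phi$. Because at most $2n + 4\binom{n}{2}$ distinct clauses of size $\le 2$ exist on $n$ variables, the closure halts in polynomial time, after which $\psi$ is strong $3$-consistent by construction.

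The principal obstacle I anticipate is the case analysis in the extension step of (a): one must disentangle the ways a value of $v_k$ can be blocked (by a unit clause or by one or two binary clauses with previously-assigned partners) and invoke the correct level of consistency in each case. A secondary subtlety in (b) is showing that the closure is monotone---adding implied clauses never destroys consistency already established at lower levels---which holds because new clauses only strengthen $\psi$'s entailments. The structural reason the theorem is specific to $2$-SAT is that any clause conflict localizes to at most one pairwise constraint; for $k$-CNF with $k \ge 3$, a single conflict can involve $k-1$ earlier variables and strong $3$-consistency would no longer suffice.
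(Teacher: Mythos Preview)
The paper does not prove this theorem at all: it is stated with a citation to \cite{dechter92} (and \cite{jeavons98} for the polynomial-time enforcement) and used as a black box, with the subsequent Remark merely sketching the general idea of enforcing $k$-consistency by adding implied constraints. So there is no ``paper's own proof'' to compare your proposal against.

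That said, your proposal is essentially the standard argument from the CSP literature: for part~(a) you are specializing to $2$-SAT the classical fact that strong $k$-consistency on a constraint network of width $k-1$ allows a backtrack-free extension of any consistent partial assignment, and for part~(b) you are describing the usual fixed-point closure that underlies consistency-enforcement algorithms. Both match what the cited references do.

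One small point on your case analysis in~(a): as written you enumerate ``both $C_0,C_1$ unary'', ``both binary with $j(0)=j(1)$'', and ``both binary with $j(0)\ne j(1)$'', but you omit the mixed case where exactly one of $C_0,C_1$ is a unit clause on $v_k$ and the other is binary with partner $v_j$. This is not a real obstacle---$2$-consistency applied to $\{v_j,v_k\}$ handles it, since the unit clause also lies in $\phi(\{v_j,v_k\})$---but you should mention it for completeness. (In the context of this paper the issue is moot anyway, since the standing assumptions exclude single-literal clauses.)
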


\begin{remark}
  More generally, given an input Boolean formula $\phi$, one can
  establish $k$-consistency  by adding extra
  constraints that do not change the set of models. This is done by
  iterating over all possible $k$-element subsets of variables and
  solving the subproblem for these variables.  Clauses are added which
  restrict the values of any subset of $k-1$ variables to only those
  values that can be extended to another variable.  If there is a set
  of $k-1$ variables none of whose assignments can be extended, then
  we can conclude $\phi$ is unsatisfiable.  If not, then these extra
  clauses are added to $\phi$ to make it $k$-consistent.  Enforcing
  strong $k$-consistency (for fixed $k$) can be accomplished in
  polynomial time \cite{jeavons98,dechter92}.

  In the special case when $\phi$ is a $2$-SAT formula these extra
  clauses are also binary and so we end up with a strong
  $3$-consistent $2$-SAT formula (which we denote by $\widehat{\phi}$) with exactly the same models (and
  hence, the same set of $\delta$-models).
\end{remark}

\bigskip

\noindent\textbf{Notation:}  For an ordered pair of 
variables $(u, v)$, we let $M_{\phi}(u, v)$ denote the set of models
of $\phi(\{u,v\})$.   

\bigskip

 Theorem~\ref{dech} (b) implies that  we can assume without loss of generality that the input is a strong
$3$-consistent $2$-SAT formula $\phi$.  Theorem~\ref{dech} also implies
that an assignment $X$ is a model of $\phi$ iff $(X(u), X(v)) \in
M_{\phi}(u, v)$ for all pairs $(u, v)$.  Clearly, we can construct all
the sets $M_{\phi}(u, v)$ in polynomial time (there are $\Theta(n^2)$
such variable tuples, where $n$ is the number of variables, and each
set $M_{\phi}(u,v)$ consists of models of a $2$-SAT formula with at
most $2$ variables). With a slight abuse of notation, we denote
$M_{\phi}(-u,v)$ to be the set $\{(\neg \alpha, \beta)|\ (\alpha,
\beta) \in M_{\phi}(u,v) \}$.

Let  $u$ be any
variable of $\phi$. Let $\phi_{u,0}=\phi \AND (\neg u)$ and
$\phi_{u,1}=\phi \AND (u)$.  If either $\phi_{u,0}$ or ${\phi}_{u,1}$ is
unsatisfiable, then it is clear that ${\phi}$ cannot have a
$\delta$-model. Assume then that both ${\phi}_{u,0}$ and ${\phi}_{u,1}$ are satisfiable and 
let $\widehat{\phi_{u,0}}$ and $\widehat{\phi_{u,1}}$ be the corresponding strong $3$-consistent formulas.
Let $N_u$ be the set of variable pairs $(v,w)$ such that
$M_{\widehat{\phi_{u,0}}}(v, w) \cap M_{\widehat{\phi_{u,1}}}(v,w) = \emptyset$.

\begin{lemma} \label{pair}
  Suppose $N_u \not= \emptyset$ for some variable $u$.
  If $\phi$ has a $\delta$-model, then there is some variable $v$, where $v \not= u$, such that $v$ belongs to every pair in $N_u$.
\end{lemma}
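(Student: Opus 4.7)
The plan is to analyze how a $\delta$-model $X$ of $\phi$ must respond to a break at coordinate $u$, and extract from that response the variable $v$ claimed by the lemma. Let $a = X(u)$, so $X$ is a model of $\phi_{u,a}$. By Definition~\ref{def:1}, breaking bit $u$ of $X$ has one of two outcomes: either (i) $\delta_u(X)$ is a model of $\phi$, or (ii) there is a repair variable $v \neq u$ such that $\delta_{uv}(X)$ is a model of $\phi$. I will show that (i) forces $N_u = \emptyset$, contradicting the hypothesis, and that in (ii) every pair in $N_u$ must contain the repair variable $v$.

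For case (i), $\delta_u(X)$ is a model of $\phi_{u,1-a}$ and agrees with $X$ on every coordinate except $u$. Hence for any pair $(w_1, w_2)$ of variables distinct from $u$, the 2-variable assignment $(X(w_1), X(w_2))$ extends to a full model of $\phi_{u,a}$ (namely $X$) and also to a full model of $\phi_{u,1-a}$ (namely $\delta_u(X)$). Using Theorem~\ref{dech}(a) together with the fact that $\widehat{\phi_{u,b}}$ has the same set of full models as $\phi_{u,b}$, one can identify $M_{\widehat{\phi_{u,b}}}(w_1, w_2)$ with the set of 2-variable restrictions of full models of $\phi_{u,b}$. This puts $(X(w_1), X(w_2))$ into $M_{\widehat{\phi_{u,0}}}(w_1, w_2) \cap M_{\widehat{\phi_{u,1}}}(w_1, w_2)$, so $(w_1, w_2) \notin N_u$. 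Since this holds for every such pair, $N_u = \emptyset$, contradicting $N_u \neq \emptyset$.

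Therefore case (ii) must hold for some repair variable $v \neq u$, and $\delta_{uv}(X)$ is a model of $\phi_{u,1-a}$. Now $X$ and $\delta_{uv}(X)$ agree on all coordinates outside $\{u, v\}$, so the same reasoning applied to any pair $(w_1, w_2)$ with $w_1, w_2 \notin \{u, v\}$ shows $(X(w_1), X(w_2)) \in M_{\widehat{\phi_{u,0}}}(w_1, w_2) \cap M_{\widehat{\phi_{u,1}}}(w_1, w_2)$, whence $(w_1, w_2) \notin N_u$. Consequently every pair in $N_u$ must involve $v$, which is the conclusion of the lemma.

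The step I expect to require the most care is the identification of $M_{\widehat{\phi_{u,b}}}(w_1, w_2)$ with the set of 2-variable projections of full models of $\phi_{u,b}$. The set is defined via the subformula of the \emph{strong $3$-consistent} $\widehat{\phi_{u,b}}$, not of $\phi_{u,b}$ directly, so one must combine Theorem~\ref{dech} and the remarks on strong $k$-consistency to argue both that every model of the 2-variable subformula extends to a full model of $\widehat{\phi_{u,b}}$ and that $\widehat{\phi_{u,b}}$ preserves the set of full models of $\phi_{u,b}$. Once this bridge is in hand, the case analysis above is entirely mechanical.
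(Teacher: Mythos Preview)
Your argument is correct and follows the same idea as the paper's (very terse) proof: take a $\delta$-model, break $u$, and observe that the single repair variable must lie in every pair of $N_u$. Your case analysis is just a more explicit unpacking of the paper's one-sentence claim that ``we are forced to flip one variable from each pair in $N_u$.'' One remark: the step you flag as delicate is actually easier than you fear. You only need the containment that the restriction of any full model of $\phi_{u,b}$ to $\{w_1,w_2\}$ lies in $M_{\widehat{\phi_{u,b}}}(w_1,w_2)$; this is immediate since $\widehat{\phi_{u,b}}$ has the same full models as $\phi_{u,b}$ and any full model satisfies every clause of the subformula. The converse (extendability of local solutions), which is what strong $3$-consistency buys, is not used in this lemma.
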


\begin{proof} If we flip the value of $u$ in a $\delta$-model of $\phi$, we can repair by flipping
at most one other variable and we are forced to flip one variable from each pair in $N_u$. This means that this repair variable is in
every pair of $N_u$.
\end{proof}

\begin{algorithm}
\caption{Algorithm for $\delta(1,1)$-$2$-SAT}
\begin{algorithmic}[1]
 \bigskip
  \State \textbf{Input:} A strong $3$-consistent $2$-SAT formula $\phi$
  \State \textbf{Output:} True if $\phi$ has a $\delta(1,1)$-model, false otherwise
  \bigskip

  \For{every variable $u$}
   \If{$\phi_{u,0}$ or $\phi_{u,1}$ is unsatisfiable}
   \State Output false.
  \EndIf
   \State Find sets $M_{\widehat{\phi_{u,0}}}(v, w)$ and
  $M_{\widehat{\phi_{u,1}}}(v, w)$ for variables $v, w$.  \State Compute $N=$
  set of pairs $(v, w)$ such that $$M_{\widehat{\phi_{u,0}}}(v, w) \cap
  M_{\widehat{\phi_{u,1}}}(v, w) = \emptyset.$$ 
  
  \State If the pairs in $N$ do
  not have a common member, then output false.  
  
  \If{$N \not=
    \emptyset$} \State For the common member $v$, \If{there is a
    variable $w$ such that $M_{\widehat{\phi_{u,0}}}(-v,w) \cap
    M_{\widehat{\phi_{u,1}}}(v, w) = \emptyset$} 
    \State set $\phi=\phi \AND (\neg u)$
  \EndIf
  \If{there is a variable $w$ such that $M_{\widehat{\phi_{u,0}}}(v,w) \cap M_{\widehat{\phi_{u,1}}}(-v,w) = \emptyset$}
  \State set $\phi=\phi \AND u$
  \EndIf
  \EndIf
  \State Check if $\phi$ is satisfiable, if not output false.
  \State If $\phi$ is satisfiable, add extra clauses to $\phi$ to make it $3$-consistent.
  \EndFor
\State{Output true}

\end{algorithmic}
\end{algorithm}

Lemma~\ref{pair} implies that we may assume that the pairs in $N_u$
have a common member. We can similarly show:

\begin{lemma} \label{force} Suppose that $v$ is a variable that
  appears in every pair in $N_u$. Then the following hold:  
  \begin{itemize}
  \item[(i)] If there exists a $w$ 
  such that    
    \[ M_{\widehat{\phi_{u,0}}}(-v,w) \cap M_{\widehat{\phi_{u,1}}}(v,w) = \emptyset  \]
   then any $\delta$-model $X$ of $\phi$ has to set $X(u)=0$.
   \item[(ii)] If there exists a $w$ such that 
  \[ M_{\widehat{\phi_{u,0}}}(v,w) \cap M_{\widehat{\phi_{u,1}}}(-v,w) =  \emptyset, \]
   then any $\delta$-model $X$ of $\phi$ has to set $X(u)=1$.
  \end{itemize}
\end{lemma}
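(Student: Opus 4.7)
The plan is to prove both parts by contradiction, using Lemma~\ref{pair} to pin down the identity of the repair variable when $u$ is broken, and then translating the break–repair transition into intersection statements about the two–variable projections $M_{\widehat{\phi_{u,0}}}$ and $M_{\widehat{\phi_{u,1}}}$.

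For part (i), suppose for contradiction that $X$ is a $\delta$-model of $\phi$ with $X(u)=1$. First I would dispose of the trivial case $N_u=\emptyset$: if it occurs, the hypothesis of (i) fails outright since any pair $(v,w)$ has $M_{\widehat{\phi_{u,0}}}(v,w)\cap M_{\widehat{\phi_{u,1}}}(v,w)\neq\emptyset$, and a quick pigeonhole/flip check shows $M_{\widehat{\phi_{u,0}}}(-v,w)\cap M_{\widehat{\phi_{u,1}}}(v,w)$ is nonempty for some $w$. So assume $N_u\neq\emptyset$. Then flipping $u$ cannot leave $X$ a model (because some pair in $N_u$ would be violated after the flip), so a repair bit $r\neq u$ exists. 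By Lemma~\ref{pair} the repair bit must lie in every pair of $N_u$, and since $v$ is by hypothesis the common element, $r=v$. Thus $X':=\delta_{\{u,v\}}(X)$ is a model of $\phi$ with $X'(u)=0$.

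Next I would read off the contradiction from the two-variable marginals. Because $X$ is a model of $\phi\wedge u=\phi_{u,1}$, Theorem~\ref{dech} applied to $\widehat{\phi_{u,1}}$ gives
\[
(X(v),X(w))\in M_{\widehat{\phi_{u,1}}}(v,w)\quad\text{for every variable }w.
\]
Similarly, since $X'$ is a model of $\phi_{u,0}$,
\[
(X'(v),X'(w))=(\neg X(v),X(w))\in M_{\widehat{\phi_{u,0}}}(v,w),
\]
which by the very definition of the ``$-v$'' notation is the same as $(X(v),X(w))\in M_{\widehat{\phi_{u,0}}}(-v,w)$. Intersecting the two memberships for any fixed $w$ yields $(X(v),X(w))\in M_{\widehat{\phi_{u,0}}}(-v,w)\cap M_{\widehat{\phi_{u,1}}}(v,w)$, contradicting the hypothesis for the particular $w$ that witnesses emptiness. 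Part (ii) is entirely symmetric: start from a putative $\delta$-model with $X(u)=0$, again force the repair to be $v$, and obtain $(X(v),X(w))\in M_{\widehat{\phi_{u,0}}}(v,w)\cap M_{\widehat{\phi_{u,1}}}(-v,w)$, contradicting the emptiness hypothesis in (ii).

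The main obstacle is bookkeeping: the ``flip the first coordinate'' convention $M_{\cdot}(-v,w)=\{(\neg\alpha,\beta)\,|\,(\alpha,\beta)\in M_{\cdot}(v,w)\}$ has to be applied consistently when translating between $X$ and $X'=\delta_{\{u,v\}}(X)$, and one must be careful that the repair to $u$ is really forced to be $v$ rather than nothing. The former is handled by a single equivalence lemma stated up front, while the latter is exactly what the $N_u\neq\emptyset$ reduction together with Lemma~\ref{pair} buys us.
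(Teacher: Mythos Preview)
Your argument captures exactly the idea the paper has in mind; the paper itself offers no proof beyond ``We can similarly show,'' and your contradiction-via-marginals is precisely the intended expansion of that remark.

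Two small points deserve tightening. First, the $N_u=\emptyset$ digression is both unnecessary (the surrounding context, in particular Lemma~\ref{pair} and the algorithm, already presupposes $N_u\neq\emptyset$) and slightly garbled: you write that the flipped intersection is nonempty ``for some $w$'' when you would need it for \emph{all} $w$ to defeat the hypothesis of~(i). Second, and more substantively, the step ``since $v$ is by hypothesis the common element, $r=v$'' silently assumes $v$ is the \emph{unique} variable common to all pairs in $N_u$. That is automatic when $|N_u|\ge 2$, but if $N_u$ consists of a single pair $\{v,v'\}$ the repair could just as well be $v'$; in that case your membership calculation yields $(X(v),X(w))\in M_{\widehat{\phi_{u,0}}}(v,w)$ rather than $M_{\widehat{\phi_{u,0}}}(-v,w)$, and the emptiness hypothesis of~(i) is not contradicted. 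The paper's algorithm speaks of ``the common member,'' so this uniqueness is evidently being taken for granted there as well; still, in a fully rigorous write-up you would want either to assume it explicitly or to dispose of the single-pair case separately.
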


Thus either of the two conditions in Lemma~\ref{force} force the value
of the variable $u$ in any $\delta$-model of $\phi$. Together
Lemmas~\ref{pair} and~\ref{force} enable us to set the values of the
variables that are forced (cf. Lemma~\ref{path}). If after setting the
values of these variables, we derive a contradiction then $\phi$
cannot have a $\delta$-model.

Algorithm (2) provides the detailed description of the algorithm.

\begin{theorem}
Algorithm~(2) decides $\delta(1,1)$-2-SAT in polynomial time.
\end{theorem}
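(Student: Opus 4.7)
The plan is to prove both that Algorithm~(2) runs in polynomial time and that it outputs true if and only if the input formula $\phi$ has a $\delta$-model. Since the algorithm's outer loop either detects a violation of a necessary condition (outputting false) or augments $\phi$ with forced literals and restores $3$-consistency, correctness splits cleanly into a soundness claim for the ``false'' branches and a completeness claim for the ``true'' branch.

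For the runtime, the outer loop runs at most $n$ times. Inside each iteration, testing satisfiability of $\phi_{u,0}$ and $\phi_{u,1}$ is polynomial in the size of $\phi$; the sets $M_{\widehat{\phi_{u,0}}}(v,w)$ and $M_{\widehat{\phi_{u,1}}}(v,w)$ for all $O(n^2)$ variable pairs can be computed in polynomial time, since each is a subset of $\{0,1\}^2$ and the strong $3$-consistent closures are obtained in polynomial time by Theorem~\ref{dech}(b). Computing $N$, testing the common-member condition, checking the two forcing tests of Lemma~\ref{force}, and restoring $3$-consistency at the end of the iteration are all polynomial, so the total runtime is polynomial. For soundness, each return-false branch reflects a necessary condition: if $\phi_{u,b}$ is unsatisfiable for some $b \in \{0,1\}$, then every model sets $u$ to $1-b$ and a break to $u$ is unrepairable; if the pairs in $N_u$ share no common variable, Lemma~\ref{pair} forbids a $\delta$-model; and if $\phi$ becomes unsatisfiable after the literals mandated by Lemma~\ref{force} are conjoined, then those literals are forced in any $\delta$-model and no $\delta$-model exists.

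For completeness, let $\phi^\star$ denote the $3$-consistent formula obtained when the loop terminates with ``true,'' and let $X$ be any model of $\phi^\star$. I would show that $X$ is a $\delta$-model of the original $\phi$ by considering an arbitrary variable $u$: if $\delta_u(X)$ already satisfies $\phi$, there is nothing to do; otherwise $N_u \neq \emptyset$ and, by the common-member check passed during the loop, the pairs in $N_u$ share a variable $v$, whose flip is the candidate repair. Since neither forcing condition of Lemma~\ref{force} fired against the value $X(u)$ during the iteration for $u$ (else the augmentation would have excluded $X$), for every other variable $w$ the pair obtained from $X$ by flipping $v$ and keeping $w$ fixed lies in $M_{\widehat{\phi^\star_{u,1-X(u)}}}(v,w)$; the strong $3$-consistency of that formula then extends this pairwise-consistent family to a complete model, which must coincide with $\delta_{uv}(X)$. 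The main obstacle is handling the interaction between iterations of the outer loop: Lemmas~\ref{pair} and~\ref{force} are applied to a formula that grows between iterations, so I have to verify that the conditions established when processing earlier variables remain valid after later augmentations, and that the final $3$-consistent $\phi^\star$ admits the chosen repair for every variable simultaneously rather than only one at a time.
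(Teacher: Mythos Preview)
Your runtime argument is essentially the same as the paper's, and in fact that is \emph{all} the paper proves: its entire proof reads that enforcing $3$-consistency is polynomial, the outer loop runs $n$ times, and each step inside (satisfiability, $3$-consistency, constructing $N_u$) is polynomial. The paper does not spell out a correctness argument at all; correctness is left implicit, presumably via Lemmas~\ref{pair} and~\ref{force}.

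Your soundness and completeness paragraphs therefore go well beyond what the paper does. The soundness direction is fine and follows directly from the two lemmas as you say. The completeness direction, however, has the gap you yourself flag: the formula $\phi$ is mutated as the loop progresses, so the sets $N_u$ and the forcing tests you appeal to for an \emph{earlier} variable $u$ were computed against a \emph{smaller} formula than the final $\phi^\star$. Nothing you have written shows that the common-member and non-forcing conditions established for $u$ in its own iteration survive the later unit-clause additions; in particular, adding a unit clause for some later variable can shrink the sets $M_{\widehat{\phi_{u,b}}}(v,w)$ and thereby enlarge $N_u$ or trigger a forcing condition that was absent before. Your argument that $\delta_{uv}(X)$ is a model also needs more care: strong $3$-consistency of $\widehat{\phi^\star_{u,1-X(u)}}$ guarantees that any pairwise-consistent assignment extends to \emph{some} model, not that the specific assignment $\delta_{uv}(X)$ is one. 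So the obstacle you name at the end is a genuine gap, not a routine detail---and the paper does not resolve it either, it simply does not attempt a correctness proof for Algorithm~(2).
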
 

\begin{proof}
Enforcing $3$-consistency is in polynomial time \cite{dechter92}. The outer loop in Line 3 executes $n$ times where $n$ is the number
of variables. Within the body of the loop, calls are made to enforce satisfiability and $3$-consistency, along with calls to construct $N_u$ for 
the variable $u$ under consideration. Each step takes polynomial time, hence the claim follows.
\end{proof}

\begin{remark} While Algorithm (2) solves the yes/no problem of testing whether an input $2$-SAT formula
has a $\delta$-model, it is a simple matter to modify the algorithm so that it outputs a $\delta$-model
if such a model exists. The forced variable assignments along with  any satisfying assignment of the remaining
$2$-SAT formula is a $\delta$-model of the input formula.
\end{remark}

\subsubsection{Complexity of $\delta(1,s)$-$2$-SAT for $s \geq 2$}\label{subsubsection-2sat3}

\begin{theorem}\label{2sat-(1,b)} 
The problem $\delta(1,s)$-2-SAT  is NP-complete for all $s>1$.
\end{theorem}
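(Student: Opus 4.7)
The plan is to establish both membership in NP and NP-hardness, for every fixed $s \geq 2$.

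For NP membership, the approach is to have a nondeterministic Turing machine guess a model $X$ of the input $2$-SAT formula $\phi$ and then verify the $\de(1,s)$ condition deterministically: for each of the $n$ possible single-bit breaks $\de_i(X)$, exhaustively enumerate all $\binom{n-1}{\leq s} = O(n^s)$ subsets of at most $s$ of the remaining coordinates as candidate repair sets, and test whether any of them yields a model. Since $s$ is a fixed constant, this check runs in polynomial time.

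For NP-hardness the plan is a reduction from $3$-SAT, in the spirit of the technique used in \citeA{gins} (cf.\ the remark following Theorem~\ref{thr:2}). Given a $3$-SAT instance $\phi$ on variables $x_1,\ldots,x_n$ with clauses $C_1,\ldots,C_m$, I would build a $2$-SAT formula $\psi$ using per-clause gadgets on fresh auxiliary variables so that $\phi$ is satisfiable iff $\psi$ has a $\de(1,s)$-model. Each clause $C_j=(l_{j,1}\vee l_{j,2}\vee l_{j,3})$ contributes a gadget containing a ``trigger'' variable $t_j$ together with $2$-literal implications, designed so that breaking $t_j$ cascades through the gadget and can be repaired by at most $s$ further flips \emph{precisely when} at least one of $l_{j,1},l_{j,2},l_{j,3}$ is set to true. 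Because $s\geq 2$, the gadget can use longer implication paths and small cycles than are permitted for $\de(1,1)$-models by Corollary~\ref{restrict1}, giving enough structural slack to encode the disjunction. Correctness then splits naturally into two directions: forward, any satisfying assignment of $\phi$ extends canonically to the auxiliary variables to yield a $\de(1,s)$-model of $\psi$, verified by case analysis on which bit is broken; reverse, in any $\de(1,s)$-model of $\psi$, the restriction to $x_1,\ldots,x_n$ must satisfy every $C_j$, since otherwise the break to $t_j$ would demand strictly more than $s$ repairs.

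The main obstacle is the gadget design. Since $2$-SAT alone cannot express a 3-way disjunction (otherwise $2$-SAT would already be NP-hard), the full force of the reduction must come from the $\de(1,s)$ fault-tolerance condition rather than from the $2$-clauses of $\psi$ themselves. Calibrating the gadget so that the trigger break costs exactly $\leq s$ repairs under satisfying partial assignments and strictly more than $s$ repairs under unsatisfying ones, while using only $2$-literal clauses, is the technical crux; it likely requires multiple auxiliary variables per clause together with a careful case analysis of how a single-bit break propagates through the implication graph $G(\psi)$ of the gadget, and a separate check that breaks occurring \emph{outside} trigger variables are always cheaply repairable regardless of the $3$-SAT assignment.
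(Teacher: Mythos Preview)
Your high-level plan matches the paper's approach exactly: NP membership by guessing a model and checking all $O(n^{s+1})$ break/repair combinations, and NP-hardness via a per-clause trigger gadget whose break is cheaply repairable iff the clause is satisfied. But the proposal stops at the plan; you explicitly identify the gadget as ``the main obstacle'' and do not supply it, so as written this is not yet a proof.

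The key idea you are missing is the choice of source problem. You reduce from $3$-SAT, and that is precisely what makes the gadget hard to design for general $s$: with a trigger $t_j$ fanning out to three literals via implications $t_j\Rightarrow l_{j,1},\,t_j\Rightarrow l_{j,2},\,t_j\Rightarrow l_{j,3}$, a break to $t_j$ when all three literals are false costs only $3$ repairs, which is $\leq s$ once $s\geq 3$; the intended ``too expensive'' case collapses. The paper sidesteps this by reducing from $(s{+}1)$-SAT instead. Now each clause has $s{+}1$ literals, the trigger $z_i$ fans out via $z_i\Rightarrow v_{i,j}$ for $1\leq j\leq s{+}1$, and each $v_{i,j}$ heads a length-$s$ implication chain $v_{i,j}\Rightarrow \alpha_{i,j,1}\Rightarrow\cdots\Rightarrow\alpha_{i,j,s}$. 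Setting $z_i=0$ and all $\alpha_{i,j,k}=1$, a break to $z_i$ forces you to flip every currently false $v_{i,j}$; this is $\leq s$ flips iff at least one literal is already true. The chains serve to force $z_i=0$ in any $\de(1,s)$-model (otherwise flipping the tail $\alpha_{i,1,s}$ needs $s{+}1$ repairs), and to absorb breaks to the $\alpha$'s cheaply. With $(s{+}1)$-SAT the arithmetic lines up automatically and the remaining verification is exactly the case analysis you outline.
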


\begin{proof}  
Clearly this problem is in  NP: an NDTM can guess such an assignment
and check that it is a model and that for every break, there are at most $s$ other bits
that can be flipped to get a model (since $s$ is fixed a priori, this leads to at most $O(n^s)$
 possible repair sets, a polynomial number of choices).

 We prove NP-completeness via a  reduction from $(s+1)$-SAT.
Let $$T=C_1 \AND C_2 \ldots \AND C_m$$ be an
instance of $(s+1)$-SAT where each clause $C_i$ is a disjunction of
$s+1$ literals: $$v_{i,1} \OR v_{i,2} \ldots \OR v_{i, s+1}.$$  

We construct an instance $T'$ of $\delta(1,s)$-2-SAT as follows:
for each clause $C_i$ in $T$, we construct an appropriate $2$-SAT formula $C_i'$.
Our resulting instance of $\delta(1,s)$-2-SAT is a conjunction of these $2$-SAT formulas. Thus,
$$T' = \bigwedge_{1 \leq i \leq m} C_i' $$
where $C_i'$ is a $2$-SAT formula defined for each clause $C_i$ as follows:
\begin{equation} \label{2satgadget}
\begin{split}
 C_i'  = & \bigwedge_{1 \leq j \leq (s+1)}   (z_i \Rightarrow v_{i,j}) \\
         & \bigwedge_{1 \leq j \leq (s+1)}  (v_{i,j} \Rightarrow \alpha_{i,j,1}) \\
         & \bigwedge_{1 \leq j \leq s+1}\  \bigwedge_{1 \leq k \leq (s-1)}  (\alpha_{i,j,k} \Rightarrow \alpha_{i,j,k+1})\\
\end{split}
\end{equation}
where we have introduced $1 + s(s+1)$ new variables:  $z_i$ and $\alpha_{i, j, k}$ for
$ 1 \leq j \leq s+1, 1 \leq k\leq s$ to define the gadget $C_i'$. 
The gadget $C_i'$ is best understood via Figure~(\ref{2sat}).

\begin{figure}

\begin{center}
  \scalebox{0.40}[0.40]{ \includegraphics{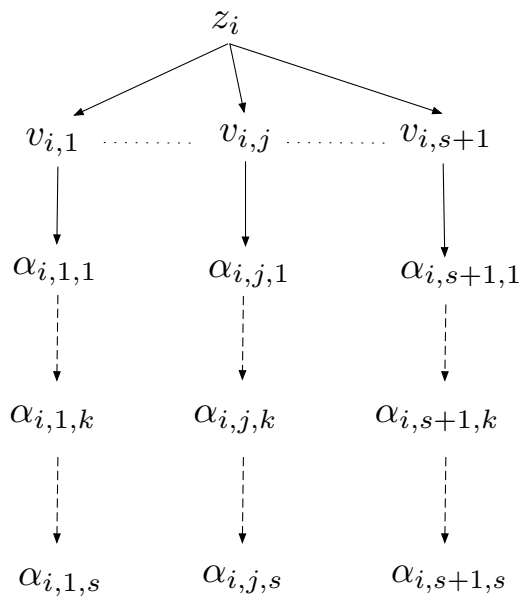}}

\end{center}
\caption{Gadget for $2$-SAT}
\label{2sat}
\end{figure}

 Let $T$ have  a model $X$. Extend that to a model of $T'$ by setting 
$z_i=0$ for all $1 \leq i \leq m$ and $\alpha_{i, j, k}=1$ for all 
$1 \leq i \leq m, 1 \leq j \leq s+1, 1 \leq k\leq s$. We claim that this is
a $\de(1,s)$-model of $T'$.  Suppose we flip the variable corresponding to literal $l$. Now we do a case analysis
of how many repairs are needed:
\begin{itemize}
\item  {[$l=z_i$]} Since $v_{i,1} \OR v_{i,2} \ldots \OR v_{i, s+1}$ is set true by the model $X$,   we need to flip at most $s$ false literals in 
$\{ v_{i,1}, \ldots,  v_{i, s+1}\}$. Observe that no more repairs are necessary.

\item {[$l=\alpha_{i,j,k}$]} Need to flip $\alpha_{i,k,k'}$ where $1 \leq k' < k$ and we might
need to flip the variable corresponding to $v_{i,j}$ if $v_{i,j}$  was set to true by $X$. This repair does not affect the truth value of other clauses of $T'$. 
Hence we flip at most  $s$ variables.

\item  {[$l=$ variable occurring in $T$]}  This will flip the value of all literals involving $l$. Because we set every $a_{i, j, k}=1$ and $z_i=0$, no repairs are needed in $T'$, as each implication (clause) of $T'$ still remains true.
\end{itemize}

 Now suppose $T'$ has a $\de(1,s)$-model. We show that $T$ has a model. Note that in such a
model $z_i=0$ for all $i$ (otherwise if $z_i=1$, then $v_{i,j}= \alpha_{i, j, k}=1$ and we will need more than $s$
repairs when we flip the value of $\alpha_{i, 1, s}$).  Now all literals $\{v_{i,1},
v_{i,2} , \ldots, v_{i, s+1} \}$ cannot be set to $0$, since a break to
$z_i$ would again necessitate $s+1$ repairs.  Hence at least one of
the literals in $\{v_{i,1},  v_{i,2}, \ldots, v_{i, s+1}\}$ is set to
$1$. In other words, the clause $C_i$ in $T$ is satisfied. Since
$z_i=0$ for all $i$, $T$ must have a model.
\end{proof}

\bigskip

\subsubsection{Complexity of $\delta^*$-$2$-SAT} \label{subsubsection-2sat4}

In this section, we show that $\delta^*$-$2$-SAT is in polynomial time.

\medskip
 
Let $\phi$ be the input $2$-SAT formula over $n$ variables.  
We construct the graph $G(\phi)$ as described before in Section~\ref{subsubsection-2sat1}.
Since a $\delta^*$-model is by definition also a $\delta$-model, we
must have the same path restrictions set forth by Lemma~\ref{path} and
Lemma~\ref{rest}.  If $\phi$ has a $\delta^*$-model, then
$G(\phi)$ has further restrictions.

\begin{lemma} \label{path*}
 Let $\phi$ be a $2$-SAT formula with a $\delta^*$-model. Then every non-trivial simple path in $G(\phi)$ has length $1$.
\end{lemma}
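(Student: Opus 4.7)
The plan is to argue by contradiction, exploiting the fact that a $\delta^*$-model $X$ belongs to a stable set $M$ in which \emph{every} member is itself a $\delta$-model, so Lemma~\ref{path} must constrain not only $X$ but also every repair of $X$. This is the key structural leverage: once we force a value on a literal via Lemma~\ref{path}, that value must persist in the repaired assignment, which collides with the fact that a break explicitly changes the broken literal's value.

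More concretely, suppose toward contradiction that $G(\phi)$ contains a simple path of length $\geq 2$. By Corollary~\ref{restrict1}(i), such a path has length $2$ or $3$, and any length-$3$ path contains a length-$2$ subpath, so it suffices to rule out a simple path $\mc{P}=(l_1,l_2,l_3)$ of length $2$. Let $X$ be a $\delta^*$-model of $\phi$ lying in a stable set $M$. Since $X$ is in particular a $\delta$-model, Lemma~\ref{path} condition (C1) gives $X(l_1)=0$.

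Next I would break the variable underlying $l_1$. By the stable-set property, there is some $Y\in M$ that repairs this break, where the repair set is disjoint from the break bit (by the convention introduced after Definition~\ref{genmodels}). Hence the underlying variable of $l_1$ is flipped in $Y$ and not flipped back, giving $Y(l_1)=1$. But $Y\in M$ is a $\delta$-model, and the graph $G(\phi)$ is a function of $\phi$ alone and still contains the same path $\mc{P}$; so applying Lemma~\ref{path} (C1) again to $Y$ forces $Y(l_1)=0$, the desired contradiction.

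There is no real obstacle here; the proof is essentially a one-line application of Lemma~\ref{path} to both $X$ and its repair. The only subtle point worth stating carefully is why the repair $Y$ inherits the $\delta$-model property — this is precisely what the stable-set reformulation of $\delta^*$-models (conditions (i) and (ii) following Definition~\ref{def3}) is designed to provide, and it is what distinguishes the $\delta^*$ setting from the plain $\delta$ setting where a repair need not be a $\delta$-model.
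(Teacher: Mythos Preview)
Your proposal is correct and follows essentially the same argument as the paper: assume a simple path $(l_1,l_2,l_3)$ exists, use Lemma~\ref{path}\,(C1) to force $l_1=0$ in any $\delta^*$-model, then observe that a break to $l_1$ must be repaired by another $\delta^*$-model (equivalently, another member of the stable set), which again must satisfy $l_1=0$ --- contradicting the fact that the repair cannot undo the break. The paper's proof is just a terser version of this; your explicit appeal to the stable-set reformulation and your remark that the detour through Corollary~\ref{restrict1} is unnecessary (any length $\geq 2$ simple path contains a length-$2$ simple subpath) are both fine.
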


\begin{proof} Suppose that $(l_1, l_2, l_3)$ is a simple path
  in $G(\phi)$ of length $2$.  Let $X$ be a $\delta^*$-model of $\phi$.
  Because of Lemma~\ref{path}, we know that $X(l_1)=0, X(l_3)=1$ and this
  has to be the case for \emph{all} $\de^*$-models. This means that a
  break to $X(l_1)$ cannot be repaired to get another $\de^*$-model. 
  Hence, $X$ cannot be a $\delta^*$-model, a contradiction.
\end{proof}

\bigskip

\begin{remark}
  Note $G(\phi)$ may have cycles $( l_1 , l_2,  l_1)$, however in that
  situation, Lemma~\ref{path*} implies that $\{l_1, l_2\}$ must form one
  connected component.  Any $\delta^*$-model if it exists assigns the
  same value to $l_1$ and $l_2$ such that the  respective variables
  form a break-repair pair and are independent of the remaining
  variables.  We can thus remove the cycles from consideration. So
  without loss of generality, we assume that $G(\phi)$ has no cycles.
\end{remark}

\bigskip

Let $R$ be the vertices in $G(\phi)$ with in-degree $0$ and $B$ be the
vertices with out-degree $0$.  Since a vertex cannot have positive
in-degree and positive out-degree, this creates a bipartition $R \cup
B$ of the vertices of $G(\phi)$, where $R, B$ are disjoint vertex sets
and all edges in $G(\phi)$ are of the form $( l, l')$ with $l \in R$ and
$l' \in B$.

Note that if $(l, l')$ is an edge in $G(\phi)$, then the out-degree of
$\NOT l$ is $0$: otherwise, there would be a path of length $2$ or a
cycle, both of which we have excluded.  Hence $l \in R$ iff $ \NOT l
\in B$. We also observe that there are no isolated vertices in $G(\phi)$
since every clause is a disjunction of distinct literals.  This gives
a complete graph theoretic characterization of the structure of
$G(\phi)$ when $\phi$ has a $\de^*$-model.

Now let $Y_0$ be an assignment that sets every literal in $R$ false
(0) and (hence sets) every literal in $B$ true (since we have assumed
that every variable appears in both positive and negative literals).

\begin{lemma}
   The assignment $Y_0$ is a $\delta^*$-model.
\end{lemma}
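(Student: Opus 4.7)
My plan is to exhibit a stable set $M$ of models containing $Y_0$; this suffices because any member of such an $M$ is, by a straightforward induction on $k$, a $\delta^k(1,1)$-model for every $k \geq 0$, hence a $\delta^*$-model. The natural candidate is
\[ M = \{Y_0\} \cup \{\delta_v(Y_0) : v \text{ is a variable of } \phi\}. \]

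The first step is to check that every element of $M$ is a model of $\phi$. For $Y_0$ this is immediate: every edge $(l_1, l_2)$ of $G(\phi)$ corresponds to an implication $l_1 \rightarrow l_2$ coming from some clause, and $Y_0(l_1) = 0$ since $l_1 \in R$. For a single flip $\delta_v(Y_0)$, I would invoke the structure already established for $G(\phi)$: Lemma~\ref{path*} together with the cycle-removal remark force $G(\phi)$ to be a bipartite graph between $R$ and $B$ with $l \in R$ iff $\neg l \in B$, in which no literal can have both positive in-degree and positive out-degree. If $v \in R$, then $\neg v \in B$ has out-degree $0$, so no clause of $\phi$ contains the literal $v$; the only clauses that touch $v$ are of the form $\neg v \OR l$ with $l \in B$, and such a clause remains satisfied because $l$ is undisturbed by the flip. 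The case $v \in B$ is symmetric.

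Next I would verify closure of $M$ under break--repair. If we break $Y_0$ at variable $v$, the result is $\delta_v(Y_0) \in M$, so no repair is needed. If we break $\delta_v(Y_0) \in M$ at a variable $u$, there are two sub-cases. When $u = v$, the break alone returns us to $Y_0 \in M$. When $u \neq v$, I would use $v$ itself as the repair bit---permissible because the repair must differ from the break and $u \neq v$---so that the combined effect is $\delta_v \circ \delta_u \circ \delta_v(Y_0) = \delta_u(Y_0) \in M$. These sub-cases exhaust the possibilities, so $M$ is closed under break--repair, and combined with the first step this shows $M$ is a stable set; hence $Y_0$ is a $\delta^*$-model.

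The main obstacle is the first step, proving that every single-variable flip of $Y_0$ remains a model. This is where the structural restrictions on $G(\phi)$ do the real work, ruling out the only way that a single flip could falsify a clause---namely, a clause whose two literals both involve the flipped variable in such a way that the flip sends both to $0$. Once that observation is secured, the stability of $M$ reduces to a brief parity argument on flipped-variable multisets.
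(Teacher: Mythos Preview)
Your proof is correct and follows essentially the same approach as the paper. Your stable set $M$ coincides exactly with the paper's set $\mathcal{C}=\{Y : Y\!\downharpoonright_B \text{ has at most one literal set false}\}$, since flipping any single variable of $Y_0$ makes precisely one literal in $B$ false; your break--repair case analysis mirrors the paper's, and you are in fact slightly more careful than the paper in explicitly verifying (via the bipartite structure of $G(\phi)$) that every element of $M$ is a model of $\phi$.
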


\begin{proof}  We exhibit a stable set $\mc{C}$ of models  of $\phi$
that contains $Y_0$.  Let $Y\!\downharpoonright_B$ (respectively, $Y\!\downharpoonright_R$) denote the restriction of 
an assignment $Y$ onto the literals in $B$ (respectively, $R$).

 Let 
$$\mc{C}=\{Y\, |\  \mbox{\ $Y\!\downharpoonright_B$ contains at most  one literal set false\ } \}.$$ 

Note that if $Y \!\downharpoonright_B$ contains at most one false literal, then
$Y \!\downharpoonright_R$ contains at most one true literal. Clearly $Y_0 \in \mc{C}$.
 
We now show that $\mc{C}$ is a stable set. Let $Y \in \mc{C}$, where
$Y \not= Y_0$.  Suppose that $Y$ sets the literal $l \in R$ to true
and $\neg l$ to false in $B$.  
If the value of the literal $l$ is flipped, then we get
$Y_0$ (a model in $\mc{C}$) and so no repairs are needed.  If a different variable is flipped, then
this creates a new literal $l'$ in $R$ set to true (and $\neg l'$ false in
$B$) in the new assignment.  Then we repair by flipping the value of $l$ from true to false, thereby
allowing only one positive literal in $R$.
Thus any break to $Y$ is repairable by another
model in $\mc{C}$. A break to $Y_0\in \mc{C}$ does not need any
repairs. Hence $\mc{C}$ is a stable set and $Y_0$ is a $\delta^*$-model.
\end{proof}

\begin{theorem}\label{superstar-2sat} 
$\delta^*$-2-SAT $\in P$.
\end{theorem}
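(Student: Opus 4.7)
The plan is to package the structural characterization built up in the preceding lemmas into a polynomial time decision procedure. First I would construct the implication graph $G(\phi)$, which can be done in $O(n^2)$ time. Then I would run the following sequence of polynomial time checks: (i) verify that $\phi$ is satisfiable, using the classical reachability-based algorithm justified by Lemma~\ref{sat2}; (ii) verify the necessary condition of Lemma~\ref{rest}, namely that there is no path from $l$ to $\neg l$ for any literal $l$; (iii) verify that every non-trivial simple path in $G(\phi)$ has length exactly $1$, as required by Lemma~\ref{path*}. If any of these checks fails, output ``no''; otherwise output ``yes''.

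The correctness of this procedure follows directly from the results already assembled. If $\phi$ has a $\delta^*$-model, then all of these conditions must hold: satisfiability is trivial, Lemma~\ref{rest} is inherited from the fact that a $\delta^*$-model is a $\delta$-model, and Lemma~\ref{path*} gives the length-$1$ path restriction. Conversely, once these conditions are met, the remark preceding Theorem~\ref{superstar-2sat} allows us to strip off the short $(l_1, l_2, l_1)$ cycles (each of which yields an independent break-repair pair), and what remains is a bipartite graph $R \cup B$ with every edge going from $R$ into $B$. The preceding lemma then exhibits the explicit assignment $Y_0$ (literals in $R$ set to false, literals in $B$ set to true) as a $\delta^*$-model, so a ``yes'' answer is witnessed by an actual model.

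For the running time, condition (iii) can be verified in $O(n^3)$ time by enumerating all pairs of edges $(l_1,l_2), (l_2,l_3)$ and checking that no such two-edge configuration yields a simple path; condition (ii) is a reachability check between the $2n$ literal vertices. Extracting $R$ and $B$, detecting and quotienting the length-$2$ cycles, and writing down $Y_0$ are all straightforward linear-time operations on $G(\phi)$.

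The main obstacle I anticipate is not any single step but rather being careful about the ``degenerate'' cases in $G(\phi)$: isolated vertices, length-$2$ cycles $(l_1,l_2,l_1)$, and literals that happen to lie in both $R$ and $B$ after cycle quotienting. The assumption that every variable appears in both positive and negative form rules out isolated vertices, and the remark following Lemma~\ref{path*} confirms that the length-$2$ cycles can be cleanly separated from the rest; once these are handled, the bipartition $R \cup B$ is well defined and the existence lemma for $Y_0$ applies verbatim, so the overall argument reduces to recording these observations cleanly.
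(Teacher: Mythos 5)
Your proposal is correct and follows essentially the same route as the paper: construct $G(\phi)$, verify in polynomial time the structural conditions established by Lemma~\ref{rest} and Lemma~\ref{path*}, and use the explicit assignment $Y_0$ from the preceding lemma as the witness for sufficiency. If anything, your write-up is more explicit than the paper's own two-sentence proof about which checks are performed and why they are jointly necessary and sufficient.
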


\begin{proof}
The graph $G(\phi)$ can be constructed in polynomial time (in time linear in the size of $\phi$).
All conditions needed for the existence of a $\delta$-model can be checked in polynomial time:
using depth-first search, one can check  if the longest simple path of $G(\phi)$
has length $1$ and check whether the subgraph of $G$ without any $2$-cycles is bipartite.
\end{proof}

\subsection{Finding $\de$-models for Horn-SAT and dual Horn-SAT}\label{subsection-horn}

Recall that an instance of Horn-SAT is a Boolean formula in CNF where
each clause contains at most $1$ positive literal. As in $2$-SAT,
there is a polynomial time algorithm to find a model of a Horn formula
(see, e.g., ~\citeA{papa}). However, {\em unlike} the situation in
$2$-SAT, finding $\delta(1,s)$-models for Horn formulas is NP complete for \emph{all} $s \geq 1$.
The proof of this fact can be easily modified to show that the same problem is NP-complete for dual Horn-SAT.

We first prove a technical lemma which will be used in the
NP-completeness proof. Define the  Boolean formula $\phi=~\phi(x, y, \beta_1, \ldots, \beta_{2s})$ over
variables $x, y, \beta_1, \ldots, \beta_{2s}$ as follows:

\begin{equation}  \label{horneq4}
\begin{split}
 \phi(x, y, \beta_1, \ldots, \beta_{2s}) = & \bigwedge_{i=1}^{s-1} (\beta_i \Rightarrow \beta_{i+1}) \\
              &  \wedge\  (\beta_s \Rightarrow x) \AND (\beta_s \Rightarrow y)  \\
              &  \AND\   (x \Rightarrow \beta_{s+1}) \AND (y \Rightarrow \beta_{s+1}) \AND \\
              &  \bigwedge_{i=s+1}^{2s-1} (\beta_i \Rightarrow \beta_{i+1})
\end{split}
\end{equation}

The formula $\phi$ is best visualized as in Figure~(\ref{Horngadget}). Observe that each variable $x$ and $y$ appears
 both as the head and  tail of a chain of implications of length $s$.

   \begin{figure}[h] 
     \caption{Gadget $\phi$}
     \begin{center}
       \scalebox{0.40}[0.40]{\includegraphics{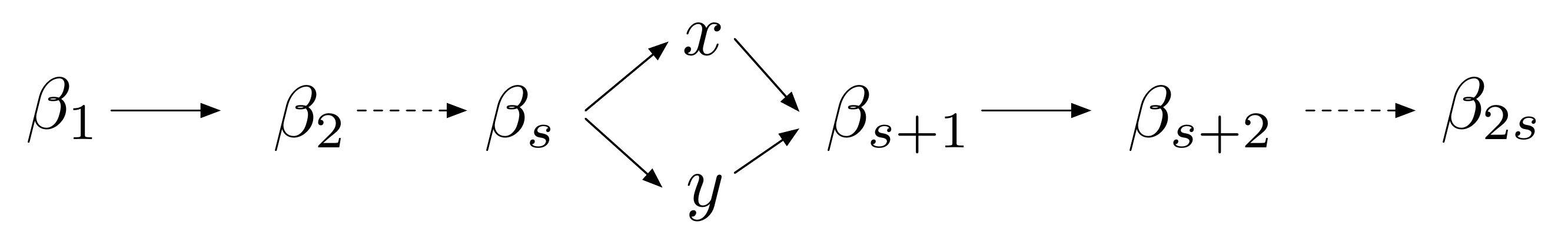}}
            \end{center}
     \label{Horngadget}
   \end{figure}

 The crucial property of this gadget that we use is as follows:
 
\begin{lemma} \label{clm1}
     Let $X$ be a model of $\phi$. Then $X$ is a $\delta(1,s)$-model iff  it satisfies $x \Leftrightarrow \neg y$.
   \end{lemma}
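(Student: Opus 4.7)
My strategy is to first figure out exactly which assignments are models of $\phi$, indexed by the pair $(X(x),X(y))$, and then, case by case, either exhibit a cheap repair (size $\leq s$) for every break, or exhibit a single break that provably admits no repair of size $\leq s$. The shape of the gadget makes this very clean: both the left chain $\beta_1\Rightarrow\cdots\Rightarrow\beta_s$ and the right chain $\beta_{s+1}\Rightarrow\cdots\Rightarrow\beta_{2s}$ are ``monotone'' in the sense that in any model each chain consists of a (possibly empty) block of $0$'s followed by a block of $1$'s, and the clauses $\beta_s\Rightarrow x,y$ and $x,y\Rightarrow \beta_{s+1}$ couple the two halves through $x$ and $y$.

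For the direction ($\Leftarrow$), assume WLOG $X(x)=1$ and $X(y)=0$. Then $\beta_s\Rightarrow y$ forces $X(\beta_i)=0$ for all $i\leq s$, and $x\Rightarrow\beta_{s+1}$ forces $X(\beta_i)=1$ for all $i\geq s+1$, so the model is uniquely determined. I then walk through the possible breaks:
\begin{itemize}
\item A break at $x$ (to $0$) or at $y$ (to $1$) yields an assignment that already satisfies every clause of $\phi$, so no repair is needed.
\item A break at $\beta_i$ with $i\leq s$ (to $1$) propagates along the left chain: the unique minimal repair flips $\beta_{i+1},\dots,\beta_s$ and then $y$, which is $(s-i)+1\leq s$ flips.
\item A break at $\beta_j$ with $j>s$ (to $0$) propagates along the right chain: the unique minimal repair flips $\beta_{s+1},\dots,\beta_{j-1}$ and then $x$, which is $(j-s-1)+1\leq s$ flips.
\end{itemize}
One checks directly that each repaired assignment satisfies every implication of $\phi$ and that the repair set is disjoint from the break bit.

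For the direction ($\Rightarrow$), I prove the contrapositive in two symmetric cases. Suppose $X(x)=X(y)=1$. The right-chain clauses force $X(\beta_{s+1})=\cdots=X(\beta_{2s})=1$. I claim the break that flips $\beta_{2s}$ to $0$ cannot be repaired with $\leq s$ further flips. The key observation is rigidity: any model $Y$ with $Y(\beta_{2s})=0$ must have $Y(\beta_i)=0$ for every $i\geq s+1$ (backward propagation of the right chain) and hence $Y(x)=Y(y)=0$ (from $x,y\Rightarrow\beta_{s+1}$), and consequently $Y(\beta_i)=0$ for all $i\leq s$. So the \emph{only} post-break model is the all-zero assignment. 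Comparing $X$ with the all-zero model and subtracting the break bit $\beta_{2s}$ itself, the repair set must contain at least $\beta_{s+1},\dots,\beta_{2s-1},x,y$, which is $s+1$ flips, exceeding $s$. A completely symmetric argument handles $X(x)=X(y)=0$ via the break at $\beta_1$, using the left chain.

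The main technical obstacle is the lower-bound half: I need to rule out \emph{every} repair of size $\leq s$, not just the ``obvious'' backward-propagation one. The rigidity argument above is what makes this clean: the implication structure forces the unique post-break model to be the all-zero (or, symmetrically, all-one) assignment, and then the size-$s+1$ lower bound on the Hamming distance between $X$ and that forced model falls out by counting. Once this rigidity is in hand, everything else is direct clause-checking.
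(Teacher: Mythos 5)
Your proof is correct and follows essentially the same route as the paper: the backward direction exhibits the same chain-propagation repairs of size at most $s$, and the forward direction uses the same two witnesses (a break to $\beta_1$ when $x=y=0$, to $\beta_{2s}$ when $x=y=1$) with the same count of $s+1$ forced repairs. Your ``rigidity'' step --- showing the unique post-break model is all-zero (resp.\ all-one) before counting the Hamming distance --- makes explicit a point the paper asserts more informally, namely why all of $\beta_2,\dots,\beta_s,x,y$ (resp.\ $\beta_{s+1},\dots,\beta_{2s-1},x,y$) must belong to any repair set.
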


   \begin{proof}
     $(\Leftarrow)$\ Let $X$ be a model of $\phi$. If $x \Leftrightarrow \neg y$ holds for $X$ (i.e., $x$ and $y$ get opposite truth values in $X$), then $X$ has to set all $\beta_i$ with $i \leq s$ to $0$
     (because either $x$ and $y$ is set to $0$)
     and all $\beta_j$ with $j \geq s+1$ to true (because either  $x$ or $y$ is set to $1$). Then a break to $\beta_i$ with $i \leq s$ requires repairs to $\beta_j$ where $i < j \leq s$ and exactly one of $x$ and $y$ (the variable 
     set to $0$). Similarly a break to $\beta_i$ with  $i \geq s+1$ requires repairs to $\beta_j, s+1 \leq j \leq i-1$ and exactly one of $x$ and $y$ (the variable set to $1$). A break to $x$ or $y$ does not need
     any repairs. Since we never need more than $s$ repairs for every break, $X$ is a $\delta(1,s)$-model.
     
     $(\Rightarrow)$ Any $\delta(1,s)$-model $X$ of $\phi$ has to set each $\beta_j$ to $0$ where $1 \leq j \leq s$ and each
     $\beta_j$ to $1$ where $s+1 \leq j \leq 2s $ (otherwise more than $s$ repairs are needed for breaks to these variables).
     If both $X(x)=X(y)=0$, then a break to $\beta_1$ (from a $0$ to $1$) would require repairs to $\beta_2, \beta_3, \ldots,\beta_{s}$
     as well as to \emph{both} $x$ and $y$, for a total of $s+1$ repairs. Hence both $x$ and $y$ cannot be false. Similarly, both $x$ and $y$
     cannot be true because then a break to $\beta_{2s}$ would require $s+1$ repairs. Hence  $X$ satisfies $ x \Leftrightarrow \neg y$.
 \end{proof}

\begin{theorem} \label{Hornb}
$\de(1,s)$-Horn-SAT is NP-complete for $s \geq 1$.
\end{theorem}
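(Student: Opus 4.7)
The plan is to first verify membership in NP and then establish NP-hardness via a reduction from 3-SAT, relying on the gadget of Lemma~\ref{clm1} as the principal tool.

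Membership in NP is routine: a nondeterministic machine guesses a model $X$ and, since $s$ is a fixed constant, examines each of the $n$ possible single-bit breaks and for each one enumerates the $O(n^s)$ candidate repair sets of size at most $s$ to check whether some repair restores a model, all in polynomial time.

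For NP-hardness, I would reduce from 3-SAT. Given a 3-SAT formula $F = C_1 \wedge \cdots \wedge C_m$ over variables $v_1,\ldots,v_n$, build a Horn formula $H$ by adding, for each variable $v_i$, a fresh copy of the gadget $\phi(x_i, y_i, \beta_{i,1}, \ldots, \beta_{i,2s})$ from Lemma~\ref{clm1}. By that lemma every $\delta(1,s)$-model of $H$ satisfies $x_i \not= y_i$ for each $i$, so we can read $y_i$ as $v_i$ and $x_i$ as $\neg v_i$. Each clause $C_j = l_{j,1} \vee l_{j,2} \vee l_{j,3}$ is translated to the Horn clause $\neg a_{j,1} \vee \neg a_{j,2} \vee \neg a_{j,3}$, where $a_{j,k} = x_i$ if $l_{j,k} = v_i$ and $a_{j,k} = y_i$ if $l_{j,k} = \neg v_i$; since each $a_{j,k}$ is exactly the variable representing ``$l_{j,k}$ is false,'' this clause (with zero positive literals, hence Horn) expresses precisely that $C_j$ is satisfied. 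The ``only if'' direction of correctness is immediate: a $\delta(1,s)$-model $X$ of $H$ induces $\sigma(v_i) := X(y_i)$, and Lemma~\ref{clm1} together with the translated clauses force $\sigma$ to satisfy $F$.

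For the ``if'' direction, given a satisfying $\sigma$ of $F$ I would set $y_i = \sigma(v_i)$, $x_i = \neg\sigma(v_i)$, and the $\beta$'s in the low-$0$/high-$1$ pattern dictated by Lemma~\ref{clm1}, then verify the $\delta(1,s)$ property break by break. The hard part is exactly this verification: a break to any $\beta_{i,j}$ forces, by the gadget analysis, a chain of up to $s$ repairs that must include flipping one of $x_i, y_i$, and this flip can in turn break translated clauses in which that variable appears, threatening to exceed the $s$-flip budget. To circumvent the cascade I would start from a bounded-occurrence variant of 3-SAT (still NP-hard) and introduce synchronized copies of $x_i$ and $y_i$, one per clause occurrence, linked by short Horn implication chains, so that the forced flip propagates through a single chain without simultaneously touching multiple translated clauses. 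A careful case analysis over break location, namely within a $\beta$-chain, at an $x$/$y$ variable, or at a copy variable, then shows the $\delta(1,s)$ budget is met, completing the reduction in polynomial time.
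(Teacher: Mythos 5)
Your overall strategy (reduction from 3-SAT, duplicating each variable into a complementary pair via the gadget of Lemma~\ref{clm1} so that every clause can be rewritten with only negative literals) is the same as the paper's, and your membership argument and ``only if'' direction are fine. The gap is in the ``if'' direction, and the repair you sketch does not close it. You correctly observe that a break to $\beta_{i,1}$ forces a cascade of $s$ repairs ($\beta_{i,2},\ldots,\beta_{i,s}$ plus the $0$-valued member of $\{x_i,y_i\}$, which must be flipped to $1$); the difficulty is that this already exhausts the entire budget of $s$, so the construction fails as soon as that last flip falsifies even \emph{one} translated clause --- which happens whenever the satisfying assignment makes exactly one literal of some clause true. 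Bounding the number of occurrences and giving each occurrence its own synchronized copy only limits \emph{how many} translated clauses are disturbed; it cannot bring that number below one, so an $(s+1)$-st repair is still needed and the argument breaks for every $s\geq 1$.

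The paper resolves exactly this point by never placing the literal variables $v_{i,j}$ (your $a_{j,k}$) inside the clause gadget at all. Each clause $C_i$ becomes $\neg z_i \vee \neg w_{i,1}\vee\neg w_{i,2}\vee\neg w_{i,3}$ over fresh variables, where each $w_{i,j}$ sits at the end of a length-$s$ implication chain $v_{i,j}\Rightarrow\alpha_{i,j,1}\Rightarrow\cdots\Rightarrow w_{i,j}$ and $z_i$ heads a chain of length $s+1$ that forces $z_i=0$ in any $\delta(1,s)$-model. In the intended model all the $\alpha$'s and $w$'s are set to $1$, so a break to any $v_{i,j}$ or to any $\beta$ never touches the clause gadget (the cascade terminates inside the chains), while a break to $z_i$ is repairable with at most $s$ flips (one $w_{i,j}$ together with its chain) precisely when some $v_{i,j}$ is already $0$, i.e., when the original clause is satisfied. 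Some such buffering layer between the literal variables and the Horn clauses is the missing ingredient in your proof.
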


\begin{proof} We prove this via a 
  reduction from $3$-SAT. Let $T$ be an
  instance of $3$-SAT, where $T= \bigwedge_{i=1}^m C_i$
  is defined over $n$ variables $x_1, x_2, \ldots, x_n$ and clause $C_i$
  is a disjunction of $3$ distinct literals. Clearly we can assume that
  every variable appears in both positive and negative literals in $T$
  (if not, we may set the pure literal to be true or false appropriately and
  consider the resulting formula as $T$).

  We first apply an intermediate transformation to $T$. We replace any
  positive literal (say $x_j$) in $C_i$ by a  negative literal,
  $\NOT x_j'$, where $x_j'$ is a new variable not occurring in
  $T$. The new clause, which now has no positive literal, is denoted
  by $C_i'$.  Remembering our global assumption that every variable in input
  Boolean formulas appear in both positive and negative literals, we
  see that this transformation will introduce variables $x_j'$ for
  \emph{every} variable $x_j$ in $T$.  To maintain logical
  equivalence, we also need to enforce that $\NOT x_j' \Leftrightarrow
  x_j$ in the new formula: so we add the following clauses: $(\NOT
  x_j' \OR \NOT x_j)$ and $(x_j' \OR x_j)$.  Note that these two
  clauses imply that in any model of this new Boolean formula, $x_j$
  and $x_j'$ cannot have the same truth value.

  Thus we obtain $$T' = \bigwedge_{1 \leq i \leq m} C_i'\ \bigwedge_{1
    \leq i \leq n} \Bigl\{ (\NOT x_i' \OR \NOT x_i) \AND (x_i' \OR
    x_i) \Bigr\} $$
  Note that $T'$ is {\em almost} Horn (since every
  clause $C_i'$ is Horn), the only non-Horn clauses are the clauses of
  the form $(x_i \OR x_i')$.  We have introduced $n$ new variables and
  $2 n$ new clauses, so that $T'$ has $m+2n$ clauses and is defined
  over $2 n$ variables.  Clearly $T'$ is satisfiable iff $T$ is
  satisfiable.

  We now construct an instance $T''$ of $\delta(1,s)$-Horn-SAT from
  $T'$ such that $T''$ has a $\delta(1,s)$-model iff $T'$ is
  satisfiable.  We first introduce $s+1$ new variables $A_1, A_2,
  \ldots, A_{s+1}$.  For each clause $C_i'= \neg v_{i,1} \OR \neg
  v_{i,2}  \OR \neg v_{i,3}$, we construct a formula $\Gamma_{i, 1}$ consisting of a single clause
  (note that at this step, each $v_{i,j}$ is a variable of the form
  $x_k$ or of the form $x_{k}'$ for some $k,\ 1 \leq k \leq n$):
  
  \begin{eqnarray} \label{horneq1}
    \Gamma_{i,1}  =  (\NOT z_i \OR \NOT w_{i,1} \OR \NOT w_{i,2} \OR  \NOT w_{i,3}) 
   \end{eqnarray} 
   where $z_i,\, w_{i,1},\, w_{i,2},\,  w_{i,3}$ are new variables introduced for each clause $C_i'$.
   This step introduces $4$ new variables per clause $C_i'$ for a total of $4 m$ new variables.
   Our next step creates formulas that places restrictions on these new variables and ties them
   in with the variables $v_{i,j}$ in the original clause. We introduce new variables
   $\alpha_{i,j,k}$ for each clause $C_i'$, where $1 \leq j \leq 3, 1 \leq k \leq s-1$, these variables forming the intermediate variables
   in a chain of implications of length $s$ from $v_{i,j}$ to $w_{i,j}$ as below:
   \begin{equation} \label{horneq2}
     \begin{split}
     \Gamma_{i,2}= & (v_{i,1} \Rightarrow \alpha_{i, 1, 1}) \AND (\alpha_{i, 1,1} \Rightarrow \alpha_{i, 1,2}) \cdots \AND (\alpha_{i,1,s-1} \Rightarrow  w_{i,1})  \\
     \AND &  (v_{i,2} \Rightarrow \alpha_{i,2,1} ) \AND (\alpha_{i,2,1} \Rightarrow \alpha_{i,2,2}) \cdots \AND (\alpha_{i,2,s-1} \Rightarrow  w_{i,2})  \\
     \AND & (v_{i,3} \Rightarrow \alpha_{i,3,1} ) \AND (\alpha_{i,3,1} \Rightarrow \alpha_{i,3,2}) \cdots \AND (\alpha_{i,3,s-1} \Rightarrow  w_{i,3}) 
     \end{split}
  \end{equation}
   
   The reader may wish to compare the the gadget $\Gamma_{i,2}$ with a similar gadget $C_i'$ in Equation~(\ref{2satgadget}) and shown in  Figure~(\ref{2sat}) that was used in the   proof of Theorem~\ref{2sat-(1,b)}.

  We also make $z_i$, one of the new variables introduced in  $\Gamma_{i,1}$, appear as the head of a chain of implications of length $s+1$ as shown below
in formula $\Gamma_{i,3}$:
  \begin{eqnarray*} \label{horneq3}
    \Gamma_{i, 3}=  (z_i \Rightarrow A_1) \AND (A_1  \Rightarrow A_2) \ldots \AND (A_{s} \Rightarrow A_{s+1})  \\
  \end{eqnarray*}

  We now define the formula $C_i''$ constructed for each clause $C_i'$, $1 \leq i \leq m$, of $T'$:  
 \[ C_i'' = \Gamma_{i,1} \AND \Gamma_{i, 2} \AND \Gamma_{i, 3} \]

 Note that each $C_i''$ is Horn and has introduced new variables $\alpha_{i,j,k}, w_{i,j}, z_i$  for a total of $ 3 (s-1) + 3 + 1= 3 s +1$ new variables.
 The other new variables $A_i$ are global, i.e, reused in the formulas for $C_i''$ for various $i$.

 For the clauses of the form $(\neg x_i' \OR \neg x_i) \AND (x_i' \OR
 x_i)$ from $T'$, where $1 \leq i \leq n$, we introduce new variables
 $\beta_{i,j}$ for each $i$ where $1 \leq j \leq 2s$ and
 construct the gadget $\phi_i=\phi(x_i, x_i', \beta_{i,1}, \beta_{i,2}, \ldots, \beta_{i,2s})$ defined in Equation~(\ref{horneq4}).

  Our instance of $\de(1,s)$-Horn-SAT is then:

   \[  T'' = \bigwedge_{1 \leq i   \leq m} C_i''\, \wedge \bigwedge_{1 \leq i \leq n} \phi_i \]

   We first show that if $T'$ is satisfiable, then $T''$ has a
   $\de$-model.  Suppose $T'$ had a model $X'$. Extend that to an
   assignment $X''$ of the variables of $T''$ by setting the values of
   the newly introduced variables as follows:

\begin{gather*}
  A_i=1  \mbox {\ for \ }  1 \leq i \leq s+1,\\
  z_i=0 \mbox{\ for \ }  1 \leq i \leq m, \\
  w_{i,j}=1 \mbox{\ for all\  $i$ and $j$, where \ }  1 \leq i \leq m \mbox{\ and\ } 1 \leq j \leq 3, \\
  \alpha_{i, j, k}=1 \mbox{\ for all $i$, $j$ where \ } 1 \leq i \leq m, 1 \leq j \leq 3,  1 \leq k \leq s-1,  \\
  \beta_{i,j}=0 \mbox{\ for all $j$,  $1 \leq j \leq s$ and all $i$, $1 \leq i \leq n$},  \\
  \beta_{i,j} = 1 \mbox{\ for all $j$, $s+1 \leq j \leq 2s $ and all $i$, $1 \leq i \leq n$}.
	\end{gather*}
 
Since $X''$ satisfies each clause in $T''$, it is a model of $T''$.
We now show that $X''$ is actually a $\delta(1,s)$-model. Suppose that some variable $v$ of $T''$ is flipped. We do a case by case analysis of the possible repairs
to this break.
\begin{description}
	\item[ [$v = x_i \mbox{\ or \ } x_i'$]]  No repairs are needed since each implication remains satisfied in $T'$.

	\item[[$v=A_i$ for some $i, 1 \leq i \leq s+1$]] The repairs needed are $A_1,
	A_2, \ldots, A_{i-1}$ (since $z_i=0$ for all $i$, it does not need to be flipped) for $i-1$ ($\leq s$) repairs.

	\item[[$v=\beta_{i,j}$ for some $1 \leq i \leq n, 1 \leq j \leq s$]]  The repairs are all $\beta_{i,k}$ where $j+1 \leq k \leq s$. 
	Since $X'$ is a model of $T'$, exactly one of $x_i$ and $x_i'$ is set to false and we need to flip just that variable.
	This leads to at most $ s-j + 1 \leq s$ repairs.

	\item[[$v=\beta_{i,j}$ for some $1 \leq i \leq n, s+1\leq j \leq 2s$]] The repairs needed are $\beta_{i,k}$ for all $s+1 \leq k < j$ and one of $x_i$ or $x_i'$ (since $X'$ is a model of $T'$ only
	one of $x_i, x_i'$ is set to true in $X'$) for at most $j-s \leq s$ repairs.

	\item[[$v=w_{i,j}$ for some $1 \leq i \leq m, 1 \leq j  \leq 3$]] The repairs needed are $\alpha_{i, j, k}$ for all $1 \leq k \leq s-1$ and $v_{i,j}$ (if $X'(v_{i,j})=1$), for at most $s$ repairs.

	\item[[$v=\alpha_{i, j, k}$ for some $1 \leq i \leq m, 1 \leq j \leq 3, 1 \leq k \leq s-1$]]  The repairs needed are $\alpha_{i, j, k'}$ for $1 \leq k'\leq  k-1$ and $v_{i,j}$ (if $X'(v_{i,j})=1$)  for at most $k \leq s-1$ repairs.

	 \item[[$v=z_i$]] It is this break alone whose repair crucially depends on the satisfiability of $T'$. Note that this break changes $z_i$ from a $0$ to a $1$ and makes the clause  $(\NOT z_i \OR \NOT w_{i,1} \OR \NOT w_{i,2} \OR \NOT w_{i,3})$ false since each $w_{i,j}$ is true in $X''$.  So repairs will have to include one or more of the $w_{i,j}$'s, which consequently might trigger flips to $\alpha_{i,j,k}$ and $v_{i,j}$. The choice of which  $w_{i,j}$ to involve in the repair process  is   indicated by the $v_{i,j}$ set to $0$ by $X'$. Since $X'$ is a model, note also that at least one $v_{i,j}$ is set to $0$. Without loss of generality, assume that $X'(v_{i,1})=0$ then repair a break to $z_i$ by flipping $w_{i,1}, \alpha_{i, 1, j}$ for all $1 \leq j \leq s-1$ for exactly $s$ repairs.   		
	 \end{description}

   Now suppose $T''$ has a $\delta$-model $X''$. We show that $T'$ is
satisfiable.  Specifically, we claim that the restriction of $X''$ to
the variables of $T'$ is a model of $T'$. From Lemma~\ref{clm1}, we
know that $\beta_{i, j}=0$ for all $1 \leq i \leq n, 1 \leq j \leq s$
and $\beta_{i, j}=1$ for all $1 \leq i \leq n,\ s+1 \leq j \leq 2s$ and
also $\neg x_i' \Leftrightarrow x_i$ in $T'$ is satisfied for each $i,\ 1 \leq i \leq n$.   Note that in $T''$,  $w_{i,j}$ is at the end of a chain of implications:
\begin{equation} \beta_{k,1} \rightarrow \beta_{k,2} \rightarrow \cdots \rightarrow \beta_{k,s} \rightarrow v_{i,j} \rightarrow \alpha_{i, j, 1} \rightarrow \cdots \rightarrow \alpha_{i, j, s-1} \rightarrow w_{i,j}  \label{bchain}
\end{equation}
where $v_{i, j}$ is either $x_k$ or $x_k'$ for some $k, 1 \leq k \leq n$. Note that the variables in the above chain are from different gadgets -- from both $\phi_k$ and from $\Gamma_{i,2}$.
This implies that $X''(w_{i,j})=1$  since otherwise $X''$ would have to set all variables in this chain to $0$ and then this would violate Lemma~\ref{clm1}. Since $X''$ is a model of
$T''$, it must be that $X''(z_i)=1$ for all $i$, otherwise $\neg z_i
\OR \neg w_{i,1} \OR \neg w_{i,2} \OR \neg w_{i,3}$ will be false.
When $z_i$ is flipped, we are guaranteed a repair of at most $s$ flips
that will make the clause $\neg z_i \OR \neg w_{i,1} \OR \neg w_{i,2}
\OR \neg w_{i,3}$ true.  This will involve flipping at least one of
$w_{i,j}$, for $j=1,2,3$.  If $v_{i,1}, v_{i,2}$ and $v_{i,3}$ were
all set to true by $X''$ (which would in turn have implied that
$X''(\alpha_{i,j,k})=1$ for all $1 \leq j \leq 3, 1 \leq k \leq s-1$)
then any such flip would require $s$ additional repairs, for a total
of $s+1$ repairs to a break to $z_i$. So it must be that 
$v_{i,j}$ is false for some $j, 1 \leq j \leq 3$. In other words, $C_i'= \neg v_{i,1} \OR \neg
v_{i,2} \OR \neg v_{i,3}$ is satisfied by  $X''$. Hence the restriction
of $X''$ to $T'$ satisfies all clauses of $T'$. Thus $T'$ is
satisfiable.

So $T'$ is satisfiable iff $T''$ has a $\delta(1,s)$-model. Since $T$
is satisfiable iff $T'$ is satisfiable and $T$ is a SAT instance, this
accomplishes the reduction from SAT. This reduction is clearly a
polynomial time reduction. Since $\de(1,s)$-Horn-SAT is clearly in NP for fixed $r$ and $s$, this proves
that it is NP-complete.
\end{proof}

\medskip

Recall that an dual-Horn formula is a Boolean formula in CNF where
each clause has at most one negative literal.  Not surprisingly,
dual-Horn-SAT formulas behave similarly to Horn-SAT when it comes to
finding $\de$-models.

\begin{theorem}
$\de(1,s)$-dual-Horn-SAT is NP-complete.
\end{theorem}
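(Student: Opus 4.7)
The plan is to give a very short reduction from $\de(1,s)$-Horn-SAT (Theorem~\ref{Hornb}) that exploits the natural duality between Horn and dual-Horn formulas, avoiding the need to redo the elaborate gadget construction used for the Horn case. Given an instance $\phi$ of $\de(1,s)$-Horn-SAT over variables $v_1, \ldots, v_n$, I construct an instance $\phi'$ of $\de(1,s)$-dual-Horn-SAT by replacing every literal in every clause of $\phi$ by its negation (so $v_i$ becomes $\neg v_i$ and $\neg v_i$ becomes $v_i$). A Horn clause has at most one positive literal, so after negation the resulting clause has at most one negative literal, which is precisely the definition of a dual-Horn clause. The construction runs in linear time.

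Membership in NP is immediate by the same argument used in Theorem~\ref{Hornb}: a nondeterministic machine guesses an assignment, and since $s$ is a fixed constant, it can verify the $\de(1,s)$-property in polynomial time by trying all $O(n^s)$ possible repair sets for each of the $n$ single-bit breaks.

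For correctness of the reduction, let $\bar{X}$ denote the bitwise complement of an assignment $X$. Because every literal of $\phi$ has been replaced by its negation, literal-by-literal one checks that $X$ satisfies a clause $C$ of $\phi$ iff $\bar{X}$ satisfies the corresponding clause $C'$ of $\phi'$; in particular, $X$ is a model of $\phi$ iff $\bar{X}$ is a model of $\phi'$. The second ingredient is that complementation commutes with coordinate flips:
\[ \overline{\delta_T(X)} \;=\; \delta_T(\bar{X}) \quad \text{for every } T \subseteq \{1, \ldots, n\}, \]
which is a direct unpacking of the definitions. Combining these two facts, for any break coordinate $i$ and any candidate repair set $S$ with $i \notin S$ and $|S| \leq s$, the assignment $\delta_{\{i\} \cup S}(X)$ is a model of $\phi$ iff $\delta_{\{i\} \cup S}(\bar{X})$ is a model of $\phi'$. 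Hence the map $X \mapsto \bar{X}$ is a bijection between $\de(1,s)$-models of $\phi$ and $\de(1,s)$-models of $\phi'$, so $\phi$ has a $\de(1,s)$-model if and only if $\phi'$ does.

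I do not expect any substantive obstacle: the argument is purely a duality observation, and no new gadget or combinatorial analysis is required beyond verifying the two identities above.
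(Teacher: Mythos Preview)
Your proof is correct. The duality argument is clean: negating every literal converts Horn clauses to dual-Horn clauses, the map $X \mapsto \bar X$ is a bijection between the model sets, and since complementation commutes with coordinate flips it also preserves the $\de(1,s)$-model property. Membership in NP is as you say.

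The paper takes a slightly different route. Rather than reducing from $\de(1,s)$-Horn-SAT, it re-runs the reduction from $3$-SAT used in Theorem~\ref{Hornb}, modifying the gadgets so they become dual-Horn: the clause $\Gamma_{i,1}$ is replaced by $(z_i \OR w_{i,1} \OR w_{i,2} \OR w_{i,3})$ and the directions of the implication chains in $\Gamma_{i,2}$ and $\Gamma_{i,3}$ are reversed. In effect the paper is applying your negate-every-literal transformation \emph{to the gadgets themselves}, but it leaves the reader to re-verify that the modified construction still works. Your argument is more economical: by reducing from the already-established Horn result, you avoid re-examining any gadget at all, and the correctness proof collapses to two one-line identities. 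The paper's approach has the minor advantage of being self-contained (it does not chain two reductions), but yours makes the underlying symmetry explicit and is the argument one would naturally reuse for any property of CNF formulas that is invariant under literal negation.
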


The proof of this theorem is very similar to that of Theorem~\ref{Hornb}: we replace Equation~(\ref{horneq1}) by
$ \Gamma_{i,1}  =  (z_i \OR  w_{i,1} \OR  w_{i,2} \OR   w_{i,3})$ and change the direction of implications in $\Gamma_{i,3}$
and Equation~(\ref{horneq2}).

\subsection{Finding $\de$-models  for $0$-valid,  $1$-valid SAT formulas}\label{subsection-valid}

Recall that a $0$-valid (resp. $1$-valid) Boolean formula is one which
is satisfied by a model with every variable set to $0$ (resp. $1$).
We now consider the complexity of finding fault-tolerant models of an
input $0$-valid (or $1$-valid) formula and refer to the corresponding
decision questions as $\delta(r,s)$-$0$-valid-SAT, $\delta(r,s)$-$1$-valid-SAT,
$\de^*$-$0$-valid-SAT etc.

The knowledge that an input Boolean formula is satisfied by some particular
assignment does not provide information about the presence of
fault-tolerant models.  Hence we would expect (correctly) that finding
such models to be NP-hard. We first prove:

\begin{theorem} \label{thr:1}
The decision problem $\de(r,s)$-0-valid-SAT is NP-complete.
\end{theorem}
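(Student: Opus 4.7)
The plan is to handle this in two parts: membership in NP (which is routine since $r$ and $s$ are fixed constants) and NP-hardness via a reduction from SAT along the same paradigm as Theorem~\ref{superstar} and Theorem~\ref{thr:2}, except that the auxiliary literal is negated so that the constructed formula is automatically $0$-valid. For membership, I would observe that an NDTM can guess a candidate assignment $X$, verify it is a model, and then enumerate the $O(n^r)$ break sets of size at most $r$; for each break set, enumerate the $O(n^s)$ disjoint repair sets of size at most $s$ and check that some yields a model. This runs in polynomial time because $r,s$ are constants.

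For NP-hardness, given a SAT instance $\phi = C_1 \wedge \cdots \wedge C_m$ over variables $v_1,\ldots,v_n$, I would introduce a fresh variable $v_{n+1}$ and define
\[ \phi' \;=\; \bigwedge_{i=1}^{m} (C_i \OR \NOT v_{n+1}). \]
The formula $\phi'$ is immediately $0$-valid since setting every variable (including $v_{n+1}$) to $0$ makes $\NOT v_{n+1}$ true in every clause. This construction is the $0$-valid analogue of the $\phi' = \phi \OR v_{n+1}$ gadget used earlier in the paper.

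For the forward direction, given a model $X$ of $\phi$, I would show that the assignment $Y = (X, 1)$ (with $v_{n+1}=1$) is a $\delta(r,s)$-model of $\phi'$: $Y$ satisfies $\phi'$ because $X$ already satisfies every $C_i$, and for any break set $S$ with $|S| \leq r$, either $v_{n+1} \in S$ (so after the break $v_{n+1}=0$ and every clause is satisfied via $\NOT v_{n+1}$, needing no repair), or $v_{n+1} \notin S$ (in which case flipping the single bit $v_{n+1}$ as the repair sets $v_{n+1}=0$ and the same trivialization of $\phi'$ applies). Either way at most one repair is required.

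For the converse, suppose $\phi'$ has a $\delta(r,s)$-model $Y'$. If $Y'(v_{n+1})=1$ then in each clause $\NOT v_{n+1}$ is false, so $C_i$ must itself be satisfied by $Y'$, and the restriction of $Y'$ to $v_1,\ldots,v_n$ is a model of $\phi$. Otherwise $Y'(v_{n+1})=0$; I would then break just the bit $v_{n+1}$, noting that since the repair set is disjoint from the break set it cannot include $v_{n+1}$, so after any repair $v_{n+1}$ remains $1$, forcing the repaired restriction to be a model of $\phi$. In either case $\phi$ is satisfiable, completing the reduction. The main subtlety to verify is that the one-bit $v_{n+1}$-flip repair strategy in the forward direction works uniformly for all $r \geq 1$, which it does because driving $v_{n+1}$ to $0$ satisfies every clause regardless of the values of the other variables.
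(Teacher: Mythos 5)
Your proposal is correct and follows essentially the same route as the paper: the same reduction $\phi' = \phi \OR \NOT v_{n+1}$ (distributed into CNF), the same forward argument that flipping $v_{n+1}$ to $0$ trivially satisfies every clause so one repair always suffices, and the same converse via the disjointness of break and repair sets. Your write-up is somewhat more explicit than the paper's about the case $Y'(v_{n+1})=0$ in the converse, but the underlying argument is identical.
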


\begin{proof}
  For the proof, we refer to the proof of Theorem~\ref{NPC} which, with
  slight modification, works for this problem as well.  We reduce from
  SAT. Let $T$ be a SAT instance, we construct an instance of
  $\de(r,s)$-0-valid-SAT, $T'=T \vee \neg y$ where $y$ is a new variable not
  appearing in $T$. Observe that $T'$ is $0$-valid (its the value of
  $y$ that matters).  The proof that $T'$ has a $\de$-model iff $T$ is
  satisfiable is identical to the proof of Theorem~\ref{NPC}: if $T$
  is satisfiable and has a model $X$, extend that to a model $X'$ of
  $T'$ by setting the value of $y$ to $1$.  Then any break consisting of
  $r$ variables in $X'$ does not require a repair if the $r$
  variables involve $y$.  If they do not involve $y$, then flipping
  the value of $y$ from a $1$ to a $0$ makes $T'$ true, hence one
  repair suffices.  Hence $X'$ is a $\de(r,s)$-model. If $T'$ has a
  $\de(r,s)$-model, it must have a model with $y$ set to $1$. The
  restriction of that model to the variables of $T$ makes $T$ true,
  hence $T$ is satisfiable.
\end{proof}

Similarly, it is easy to verify that the proofs of Theorem~\ref{NPC},
Theorem~\ref{superstar} work when the input formula is a $0$-valid or
$1$-valid formula.  Hence we have the following:

\begin{theorem}
The decision problem $\de(r,s)$-1-valid-SAT  is NP-complete. The problem $\de^*(1,1)$-0-valid-SAT and $\de^*(1,1)$-1-valid-SAT are in NEXP and are NP-hard.
\end{theorem}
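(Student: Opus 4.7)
The plan is to handle the three claims by minor adaptations of the reductions already used for Theorem~\ref{thr:1} and Theorem~\ref{superstar}; the key observation is that a single extra variable whose polarity is chosen appropriately can turn an arbitrary SAT instance into a $0$-valid or a $1$-valid instance without changing the $\delta$- or $\delta^*$-structure in an essential way.

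For $\de(r,s)$-$1$-valid-SAT, I would mirror the proof of Theorem~\ref{thr:1}: given a SAT instance $T$ on variables $v_1,\ldots,v_n$, define $T' = T \OR y$ (put into CNF by disjoining $y$ into every clause of $T$), where $y$ is a fresh variable. Setting all variables to $1$ makes $y$ true and hence $T'$ true, so $T'$ is $1$-valid. If $T$ has a model $X$, extend it to $X'$ with $y=0$; any break of at most $r$ bits either touches $y$, in which case no repair is needed, or does not, in which case flipping $y$ from $0$ to $1$ satisfies $T'$ and serves as a single repair. Conversely, any $\de(r,s)$-model of $T'$ must in particular be a model, and among its extensions the one with $y=0$ restricts to a model of $T$ (while if it is only realized with $y=1$, we still can show via the break-repair condition that some satisfying assignment of $T$ exists; alternatively, argue that without loss of generality such a $\delta$-model with $y=0$ exists). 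Membership in NP is immediate: guess the assignment and check all $O(n^r)$ breaks in polynomial time.

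For the $\de^*$ claims, membership in NEXP is inherited directly from Theorem~\ref{superstar}, since $0$-valid and $1$-valid SAT are syntactic restrictions of SAT and the NEXP algorithm (guess a stable set of at most exponentially many models, verify the stability conditions) is oblivious to the form of $\phi$. For NP-hardness of $\de^*$-$1$-valid-SAT, I would reuse verbatim the reduction in Theorem~\ref{superstar}: given $\phi$, form $\phi' = \phi \OR v_{n+1}$, which is already $1$-valid since the all-ones vector sets $v_{n+1}=1$. The stable set $M = \{Y, Y_1,\ldots,Y_n\}$ built in the proof of Theorem~\ref{superstar} witnesses that $\phi'$ has a $\delta^*$-model whenever $\phi$ is satisfiable, and the converse (any $\delta^*$-model of $\phi'$ has one with $v_{n+1}=0$, whose restriction satisfies $\phi$) is unchanged.

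For $\de^*$-$0$-valid-SAT, the symmetric reduction uses $\phi' = \phi \OR \neg v_{n+1}$, which is $0$-valid (the all-zeros vector sets $\neg v_{n+1}=1$). I would then build the stable set with the polarity of $v_{n+1}$ swapped: let $Y$ extend a model $X$ of $\phi$ by setting $v_{n+1}=1$, and let $Y_i$ extend $\delta_i(X)$ by setting $v_{n+1}=0$ (so each $Y_i$ is a model of $\phi'$ thanks to the $\neg v_{n+1}$ disjunct). The verification of the stable-set conditions for $M = \{Y, Y_1,\ldots,Y_n\}$ is then identical to that in Theorem~\ref{superstar} up to exchanging $0$ and $1$ in the role of $v_{n+1}$. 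The mild obstacle is bookkeeping around the paper's standing assumption that every variable appears in both polarities; this is harmless since one may append a trivial tautological clause involving $v_{n+1}$ of the opposite polarity (or observe, as in Theorem~\ref{NPC} and Theorem~\ref{superstar}, that this assumption is only used for convenience and does not affect the reductions). This completes the plan.
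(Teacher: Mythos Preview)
Your proposal is correct and follows essentially the same approach as the paper, which itself merely remarks that the proofs of Theorem~\ref{NPC} and Theorem~\ref{superstar} go through unchanged for $0$-valid and $1$-valid inputs. Your handling of the converse direction for $\de(r,s)$-$1$-valid-SAT is slightly more hedged than necessary: just as in Theorem~\ref{thr:1}, if a $\de(r,s)$-model $X'$ of $T'=T\OR y$ has $y=1$, then breaking $y$ and applying the guaranteed repair yields a model with $y=0$, whose restriction satisfies $T$; so in either case $T$ is satisfiable.
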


\subsection{Finding $\de$-models for Affine-SAT}
\label{sec:finding-de-models-affine}

 Another class of Boolean formulas that have polynomial time
satisfiability checkers is  Affine-SAT: these are formulas
which are a conjunction of clauses, where each clause
is an exclusive-or (denoted by $\oplus$) of distinct literals
($a \oplus b=1$ iff exactly one of the Boolean variables $a,b$ is set to $1$).

\begin{example} \label{ex:5}
An example of an Affine-SAT formula is
\[ (x_1 \oplus x_2 \oplus x_3 \oplus x_4 =1) \AND (x_3 \oplus x_4 = 0) \]
This formula has a $\de$-model $X=(1,0,0,0)$. In fact, $X$ is easily seen to be a $\de^*$-model
(which is true of all $\de$-models of Affine-SAT formulas, as we shall shortly see).
\end{example}

One can find a satisfying assignment for a formula in affine form by a
variant of Gaussian elimination. We now prove that finding
$\delta$-models for affine formulas is also in polynomial time.

\begin{lemma} \label{afsat1} An Affine-SAT formula $\phi$ has a
  $\de$-model iff $\phi$ is satisfiable and for every variable $v \in
  V$ appearing in $\phi$ there exists a variable $w=w(v)$ such that
  $v$ and $w$ appear in exactly the same clauses.
  \end{lemma}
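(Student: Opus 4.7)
The plan is to exploit the linear-algebraic structure of Affine-SAT over $\mathrm{GF}(2)$: the effect of flipping bits on the truth values of the clauses is linear, so ``breaks'' and ``repairs'' decouple from the particular model $X$ and depend only on the incidence pattern of variables in clauses.

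First I would establish the basic calculation. If $X$ is a model of $\phi$ and $S$ is any set of variables, then a clause $C$ is satisfied by $\delta_S(X)$ iff $|S \cap \mathrm{vars}(C)|$ is even (since each flipped variable in $C$ toggles the XOR-parity of $C$). In particular, flipping a single variable $v$ produces an assignment that violates exactly the clauses containing $v$, and flipping the pair $\{v,w\}$ (with $v \neq w$) produces a model iff no clause contains exactly one of $v,w$, i.e.\ iff $v$ and $w$ occur in exactly the same clauses of $\phi$. Under the standing assumption that every variable occurs in at least one clause, case (i) of Definition~\ref{def:1} is never available, so a $\delta$-model must always satisfy case (ii).

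For the forward direction, assume $X$ is a $\delta$-model. Then $\phi$ is satisfiable by definition. For each variable $v$, the single-bit break $\delta_v(X)$ breaks at least one clause (by the standing assumption), so there must exist a repair $w \neq v$ with $\delta_{v,w}(X)$ a model; the calculation above then forces $v$ and $w$ to appear in exactly the same clauses, giving the companion $w(v)$. For the backward direction, assume $\phi$ is satisfiable, fix any model $X$, and assume each variable $v$ has a companion $w(v)$ appearing in the same clauses. Then for any break $\delta_v(X)$, flipping $w(v)$ restores every broken clause by the same calculation, so $X$ is a $\delta$-model.

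There is no real obstacle; the only subtlety is making sure that case (i) of the $\delta$-model definition is properly accounted for (it is ruled out by the paper's standing assumption that every variable appears), and that the companion condition is seen to be a property of the formula's clause-variable incidence matrix rather than of the particular satisfying assignment $X$. Because of the latter, once the companion condition holds, \emph{every} model of $\phi$ is in fact a $\delta$-model — a strengthening that will also be useful for the subsequent Affine-SAT results (including the $\delta^*$-Affine-SAT algorithm).
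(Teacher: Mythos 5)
Your proposal is correct and follows essentially the same route as the paper's proof: flipping $v$ falsifies exactly the clauses containing $v$, so a single repair $w$ works iff $w$ occurs in exactly those clauses, and conversely such pairings make any model a $\delta$-model. Your explicit parity calculation and the remark that case (i) of Definition~\ref{def:1} is excluded by the standing assumption merely make rigorous what the paper leaves implicit (and your closing observation that the condition is independent of $X$ is exactly the point the paper uses later for $\delta^*$-Affine-SAT).
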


  \begin{proof}  
  Let $X$ be a $\de$-model of $\phi$.
  If a variable $v$ is flipped, then the clauses
  that $v$ appears in become false, to repair them we need to flip
  some other variable that appears in exactly those clauses (and no
  others). Thus such a variable pairing must exist. The reverse direction is easily proved:
  if such a variable pairing exists, then the variables form a break-repair pair.
\end{proof}

Since the conditions of Lemma~\ref{afsat1} are easy to check in polynomial time, we have the following theorem:

\begin{theorem}\label{affine-sat}
 $\de(1,1)$-Affine-SAT $\in$ P.
\end{theorem}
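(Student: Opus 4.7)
The plan is to apply Lemma~\ref{afsat1} directly and argue that both of the necessary-and-sufficient conditions it supplies can be tested in polynomial time. This reduces the task to two well-understood subtasks: deciding satisfiability of an Affine-SAT formula, and comparing the clause-incidence patterns of the variables.

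First, I would decide satisfiability of the input Affine-SAT formula $\phi$ in polynomial time by Gaussian elimination over $\gf(2)$, since each clause of $\phi$ translates directly into a linear equation over the two-element field. If $\phi$ is unsatisfiable, return ``no'' immediately, since by Lemma~\ref{afsat1} this precludes the existence of a $\de$-model.

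Second, assuming $\phi$ is satisfiable, I would check whether every variable $v$ appearing in $\phi$ has a partner $w \neq v$ such that $v$ and $w$ occur in exactly the same set of clauses. To do this, associate with each variable $v$ its clause-incidence vector $I(v) \in \{0,1\}^m$, where $m$ is the number of clauses of $\phi$ and the $j$-th coordinate of $I(v)$ is $1$ iff $v$ appears in clause $j$. Sorting (or hashing) the variables by these vectors partitions $V$ into buckets of variables sharing an incidence pattern; then verify that every bucket has size at least two. If every $I(v)$ is realized by at least two distinct variables, Lemma~\ref{afsat1} guarantees that $\phi$ has a $\de$-model; otherwise some variable has a unique incidence pattern and no $\de$-model exists.

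Both steps run in time polynomial in the size of $\phi$ (Gaussian elimination takes $O((n+m)^3)$ and the incidence-vector bucketing takes $O(nm \log n)$), which suffices to establish membership in P. There is no genuine obstacle here: Lemma~\ref{afsat1} has done the conceptual work of reducing $\de$-model existence to a local pairing condition, and the algorithm is essentially a bookkeeping exercise. As a side benefit, the procedure can also be made constructive: once satisfiability is established, any assignment produced by the Gaussian elimination step is already a $\de$-model, since the pairing condition alone certifies that every break can be single-bit repaired from any model.
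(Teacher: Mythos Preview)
Your proposal is correct and follows essentially the same approach as the paper: both invoke Lemma~\ref{afsat1} and observe that its two conditions (satisfiability and the variable-pairing condition) can be checked in polynomial time. You supply more implementation detail (Gaussian elimination, incidence-vector bucketing) than the paper, which simply asserts the conditions are ``easy to check in polynomial time,'' but the argument is the same.
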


We can, in fact, slightly strengthen our theorem. We first state an analogue of Lemma~\ref{afsat1}, where
the variable pairings can be easily generalized.

\medskip

\noindent \textbf{Definition:} The parity of an integer $n$ is $n \bmod 2$.

\medskip

\begin{lemma}\label{afsat2}
    An Affine-SAT formula $\phi$ has a $\de(r,s)$-model iff $\phi$ is satisfiable and for every set $R$ of at most $r$ variables, there
exists a set $S, S \intersect R= \emptyset$ of at most $s$ variables, such that for all clauses $C$ of $\phi$, the parity of the number of variables of $R$ appearing in $C$ is the same as the parity of the number of variables of $S$ appearing in $C$.
 \end{lemma}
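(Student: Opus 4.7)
The plan is to parallel the proof of Lemma~\ref{afsat1}, exploiting the key linearity fact underlying all of affine SAT: for an XOR-clause $C$ and any assignment $X$, flipping a set $T$ of variables changes the Boolean value $C(X)$ by $|T \cap \mathrm{vars}(C)| \bmod 2$. In particular, for disjoint sets $R$ and $S$, the assignment $\delta_{R \cup S}(X)$ satisfies $C$ iff $X$ does iff $|R \cap \mathrm{vars}(C)|$ and $|S \cap \mathrm{vars}(C)|$ have the same parity. This reduces the question of whether a flip restores satisfaction to a purely combinatorial parity question that is independent of the underlying model $X$.

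For the forward direction, assume $X$ is a $\delta(r,s)$-model; then $\phi$ is trivially satisfiable. Given any $R$ with $|R| \leq r$, Definition~\ref{genmodels} supplies a disjoint set $S$ with $|S| \leq s$ such that $\delta_{R \cup S}(X)$ is a model. Since both $X$ and $\delta_{R \cup S}(X)$ satisfy every clause $C$, the linearity observation forces the parity of $|R \cap \mathrm{vars}(C)|$ to equal the parity of $|S \cap \mathrm{vars}(C)|$ for every $C$, which is exactly the stated condition. For the reverse direction, let $X$ be any satisfying assignment of $\phi$ and fix a break set $R$ with $|R| \leq r$; the hypothesis furnishes a disjoint repair set $S$ with $|S| \leq s$ whose parities match those of $R$ on every clause, and by linearity $\delta_{R \cup S}(X)$ is again a model of $\phi$. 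Hence $X$ is a $\delta(r,s)$-model. The case in which $R$ itself has even intersection with every clause (so no repair is needed) is absorbed by allowing $S = \emptyset$, which the definition explicitly permits.

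The proof will be essentially routine once the linearity principle is stated cleanly, so the main concern is not mathematical depth but notational hygiene: I must be careful that ``variables appearing in $C$'' refers to the underlying variables (not literals), so that flipping a variable truly toggles the XOR value regardless of whether it appears positively or negatively in $C$, and I must verify that the disjointness of $R$ and $S$ is preserved throughout, since without it the parity calculation could double-count cancellations. Apart from these bookkeeping points, the lemma follows the pattern of Lemma~\ref{afsat1} verbatim, and together with the observation that the parity conditions can be checked in time polynomial in $n^{r+s}$ (with $r,s$ fixed), it gives $\delta(r,s)$-Affine-SAT $\in$ P.
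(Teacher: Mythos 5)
Your proof is correct and follows the same route the paper intends: the paper states Lemma~\ref{afsat2} without proof as the ``easily generalized'' version of Lemma~\ref{afsat1}, whose proof is exactly the $r=s=1$ instance of your parity/linearity observation (flipping a disjoint break--repair pair preserves each XOR clause iff the two sets meet the clause's variables with equal parity). Your write-up simply makes explicit the generalization the paper leaves to the reader, including the correct handling of $S=\emptyset$ and of negated XOR clauses.
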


We now prove:

\begin{theorem}
  $\de(r,s)$-Affine-SAT is in $P$.
\end{theorem}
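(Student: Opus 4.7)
The plan is to use Lemma~\ref{afsat2} as a polynomial-time checkable characterization: the existence of a $\delta(r,s)$-model of $\phi$ reduces to a purely combinatorial condition on the variable-clause incidence structure of $\phi$, independent of any particular model. Because $r$ and $s$ are fixed constants, the family of subsets over which we need to quantify has polynomial size, which is exactly what one needs to turn the characterization into an algorithm.

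Let $n$ denote the number of variables and $m$ the number of clauses of $\phi$. The algorithm I propose is: first, check that $\phi$ is satisfiable (standard Gaussian elimination over $\gf(2)$); if not, return false. Otherwise, iterate over every subset $R \subseteq V$ with $|R| \leq r$; there are $O(n^r)$ such subsets. For each $R$, compute the target parity vector $p \in \gf(2)^m$, where $p_C$ is the parity of the number of variables of $R$ occurring in clause $C$. Then search for a set $S \subseteq V \setminus R$ with $|S| \leq s$ such that, for every clause $C$, the parity of the number of variables of $S$ occurring in $C$ equals $p_C$. If some $R$ admits no valid $S$, return false; if every $R$ admits one, return true. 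The inner search for $S$ can be handled by brute enumeration: since $s$ is fixed, there are $O(n^s)$ candidate sets in $V \setminus R$, and each is tested against the $m$ clauses in $O(m s)$ time. A cleaner alternative is to view each variable $v \in V \setminus R$ as a column vector $\chi_v \in \gf(2)^m$ whose entries indicate which clauses contain $v$; then the question becomes whether $p$ can be written as a $\gf(2)$-sum of at most $s$ of the columns $\chi_v$.

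Correctness is immediate from Lemma~\ref{afsat2}. The total running time is $O(n^{r+s} \cdot m \cdot (r+s))$, polynomial for fixed $r$ and $s$. The main conceptual hurdle really sits in Lemma~\ref{afsat2} itself; once that characterization is available, the remaining task is bounded enumeration over a polynomial-size search space, and no further cleverness is required beyond recognizing that the satisfiability preliminary and both levels of subset enumeration each stay polynomial under the assumption that $r, s$ are constants.
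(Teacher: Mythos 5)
Your proposal is correct and follows essentially the same route as the paper: invoke Lemma~\ref{afsat2} as the characterization, then enumerate the $O(n^r)$ break sets $R$ and, for each, the $O(n^s)$ candidate repair sets $S$, checking the clause-by-clause parity condition; your explicit satisfiability pre-check and the $\gf(2)$ column-sum reformulation are harmless refinements of the same argument.
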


\begin{proof} Since $r$ and $s$ are fixed constants, the conditions in
  Lemma~\ref{afsat2} can be checked in polynomial time: for each
  choice of the set $R$ such that $R \leq r$, (there are $O(n^{r})$
  such sets), cycle through each possible set $S$ where $|S| \leq s, S
  \intersect R = \emptyset$ (there are $O(n^s)$ such sets), check to
  see if the conditions of Lemma~\ref{afsat2} are satisfied (in
  particular, test whether the parity of the variables of $R$
  appearing in any clause $=$ parity of the variables of $S$ appearing
  in the clause, which also can be accomplished in polynomial time).

Hence $\de(r,s)$-Affine-SAT is in polynomial time.  \end{proof}

Theorem~\ref{affine-sat} implies that any $\delta$-model of $\phi$
is actually a $\de^*$-model, since if the pairings $(u,w(v))$ exist, \emph{any model} of $\phi$
will become a $\de$-model (with $\{v, w(v)\}$ forming break-repair pairs).

Hence an Affine-SAT formula has a
$\de$-model iff it has a $\de^*$-model, hence finding a $\de^*$-model for Affine-SAT formulas 
is also in polynomial time.

We thus have the following theorem:
\begin{theorem}
$\de^*$-Affine-SAT $\in $ P.
\end{theorem}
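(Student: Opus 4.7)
The plan is to leverage the structural characterization of $\delta$-models for Affine-SAT given in Lemma~\ref{afsat1}, which provides a condition that is intrinsic to the formula rather than to any specific model. First I would observe that the existence of pairings $v \mapsto w(v)$ (such that $v$ and $w(v)$ appear in exactly the same clauses) is a purely syntactic property of $\phi$: it does not depend on which model of $\phi$ we are looking at. Hence, whenever these pairings exist, flipping any variable $v$ of \emph{any} model $X$ of $\phi$ breaks exactly the clauses containing $v$, and flipping $w(v)$ restores all of them without breaking anything else, because $w(v)$ appears in exactly those clauses.

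Next I would conclude that, under the pairing hypothesis, every model of $\phi$ is a $\delta$-model. Since the repair of a break in any model again yields a model (and hence a $\delta$-model by the same argument applied to a different model), every break can be repaired to a new $\delta$-model. By induction on $k$, every model is then a $\delta^k$-model for each $k \geq 0$, so every model is a $\delta^*$-model. Therefore $\phi$ has a $\delta^*$-model if and only if it has a $\delta$-model, which by Theorem~\ref{affine-sat} can be decided in polynomial time.

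The only step requiring any care is verifying that the pairings really do support repair from \emph{every} model (not just the one exhibited in the proof of Lemma~\ref{afsat1}), but this is immediate from the fact that an Affine clause's truth value depends only on the parity of $1$'s among its literals: simultaneously flipping $v$ and $w(v)$ changes that parity by zero in every clause, since $v$ and $w(v)$ appear in exactly the same set of clauses. I do not expect a genuine obstacle here; the main conceptual point is simply recognizing that the characterization of Lemma~\ref{afsat1} is formula-intrinsic and therefore automatically collapses the hierarchy $\delta, \delta^2, \ldots, \delta^*$ for Affine-SAT.
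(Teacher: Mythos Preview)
Your proposal is correct and follows essentially the same approach as the paper: both argue that the pairing condition of Lemma~\ref{afsat1} is formula-intrinsic, so every model is already a $\delta$-model, which collapses the $\delta^k$ hierarchy and reduces $\delta^*$-Affine-SAT to $\delta$-Affine-SAT (in P by Theorem~\ref{affine-sat}). Your explicit parity justification for why flipping $v$ and $w(v)$ preserves every clause is a nice elaboration that the paper leaves implicit.
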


\section{Future Work}\label{section-future}

    The complexity of $\delta(r,s)$-SAT where $r$ and $s$ are part of the
input as opposed to being fixed constants is not known. This problem
is in the complexity class $\Sigma_3^p$, but is it complete for that
class?  The status of this problem for restricted Boolean formulas like $2$-SAT, Horn-SAT etc., when $r$ and $s$ are specified in the input
is similarly open. At present, we do not also know if
$\delta^*(r,s)$-SAT can be decided in polynomial space when $r,s$ are
fixed constants.

Finally, a practical modification of the concept of $\de$-models would
involve weakening the condition to allow for only a high percentage of
breaks to be repairable.

\section*{Acknowledgements}

The author is grateful to  Eugene M. Luks for his encouragement and advice.  We also thank the anonymous
referees for their detailed comments and suggestions.

\bibliography{roy} \bibliographystyle{theapa}

\end{document}